\icmltitlerunning{Convergent \textsc{Tree Backup} and \textsc{Retrace} with Function Approximation}
\renewcommand{\S}{\mathcal{S}}
\newcommand{\A}{\mathcal{A}}
\newcommand{\T}{\mathcal{T}}
\newcommand{\R}{\mathcal{R}}
\newcommand{\Real}{\mathbb{R}}
\newcommand{\E}{\mathbb{E}}
\newcommand{\EE}[1]{\E\left[#1\right]}
\newcommand{\cbar}{\,|\,}
\newtheorem{assumption}{Assumption}
\newtheorem{proposition}{Proposition}
\begin{document}

\twocolumn[
\icmltitle{Convergent \textsc{Tree Backup} and \textsc{Retrace} with Function Approximation}



\icmlsetsymbol{equal}{*}
\begin{icmlauthorlist}
\icmlauthor{Ahmed Touati}{udem,fb}
\icmlauthor{Pierre-Luc Bacon}{mcgill}
\icmlauthor{Doina Precup}{mcgill,cifar}
\icmlauthor{Pascal Vincent}{udem,fb,cifar}
\end{icmlauthorlist}

\icmlaffiliation{fb}{Facebook AI Research}
\icmlaffiliation{cifar}{Canadian Institute for Advanced Research (CIFAR)}
\icmlaffiliation{udem}{MILA, Universit\'e de Montr\'eal}

\icmlaffiliation{mcgill}{MILA, McGill University}

\icmlcorrespondingauthor{Ahmed Touati}{ahmed.touati@umontreal.ca}

\icmlkeywords{Machine Learning, ICML}

\vskip 0.3in
]



\printAffiliationsAndNotice{}  

\begin{abstract}

Off-policy learning is key to scaling up reinforcement learning as it allows to learn about a target policy from the experience generated by a different behavior policy. Unfortunately, it has been challenging to combine off-policy learning with function approximation and multi-step bootstrapping in a way that leads to both stable and efficient algorithms. In this work, we show that the \textsc{Tree Backup} and \textsc{Retrace} algorithms are unstable with linear function approximation, both in theory and in practice with specific examples. Based on our analysis, we then derive stable and efficient gradient-based algorithms using a quadratic convex-concave saddle-point formulation. By exploiting the problem structure proper to these algorithms, we are able to provide convergence guarantees and finite-sample bounds. The applicability of our new analysis also goes beyond \textsc{Tree Backup} and \textsc{Retrace} and allows us to provide new convergence rates for the GTD and GTD2 algorithms without having recourse to projections or Polyak averaging. 
\end{abstract}

\section{Introduction}
Rather than being confined to their own stream of 
experience, off-policy learning algorithms are capable of 
leveraging data from a different behavior than the one being followed, which can provide many benefits: efficient parallel exploration as in~\citet{mnih2016asynchronous} and~\citet{wang2016sample}, reuse of past experience with experience replay~\citep{Lin1992} and, in many practical contexts, learning form data produced by policies that are currently deployed, but which we want to improve (as in many scenarios of working with an industrial or health care partner).
Moreover, a single stream of experience can be used to learn about a variety of different targets which may take the form of value functions corresponding to different policies and time scales~\citep{sutton1999opt} or to predicting  different reward functions as in~\citet{Sutton2004} and~\citet{sutton2011horde}. Therefore, the design and analysis of off-policy algorithms using all the features of reinforcement learning, e.g. bootstrapping, multi-step updates (eligibility traces), and function approximation has been explored extensively over three decades. While off-policy learning and function approximation have been understood in isolation, their combination with multi-steps bootstrapping produces a so-called \textit{deadly triad}~\citep{Sutton2015talk,SuttonBarto2017}, i.e., many algorithms in this category are unstable.

A convergent approach to this triad is provided by importance sampling, which bends the behavior policy distribution onto the target one~\citep{precup2000eligibility, precup2001eligibility}. However, as the length of the trajectories increases, the variance of importance sampling corrections tends to become very large. The \textsc{Tree Backup} algorithm~\citep{precup2000eligibility} is an alternative approach which remarkably does not rely on importance sampling ratios directly. More recently, \citet{munos2016safe} introduced the \textsc{Retrace} algorithm which also builds on \textsc{Tree Backup} to perform off-policy learning without importance sampling. 

Until now, \textsc{Tree Backup} and \textsc{Retrace}($\lambda$) had only been shown to converge in the tabular case, and their behavior with linear function approximation was not known. In this paper, we show that this combination with linear function approximation is in fact divergent. We obtain this result by analyzing the mean behavior of \textsc{Tree Backup} and \textsc{Retrace} using the ordinary differential equation (ODE)~\citep{Borkar2000} associated with them. We also demonstrate this instability with a concrete counterexample.  

Insights gained from this analysis allow us to derive a new gradient-based algorithm with provable convergence guarantees. Instead of adapting the derivation of Gradient Temporal Difference (GTD) learning from ~\citep{sutton2009convergent}, we use a primal-dual saddle point formulation \citep{liu2015finite, macua2015distributed} which facilitates the derivation of sample complexity bounds. 
The underlying saddle-point problem combines the primal variables, function approximation parameters, and dual variables through a bilinear term. 

In general, stochastic primal-dual gradient algorithms like the ones derived in this paper can be shown to achieve $O(1/k)$ convergence rate (where $k$ is the number of iterations). For example, this has been established for the class of forward-backward algorithms with added noise ~\citep{rosasco2016stochastic}. Furthermore, this work assumes that the objective function is composed of a convex-concave term and a strongly convex-concave regularization term that admits a tractable proximal mapping. In this paper, we are able to achieve the same $O(1/k)$ convergence rate without having to assume strong convexity with respect to the primal variables and in the absence of proximal mappings. As corollary, our convergence rate result extends to the well-known gradient-based temporal difference algorithms GTD~\citep{sutton2009convergent} and GTD2~\citep{sutton2009fast} and hence improves the previously published results.

The algorithms resulting from our analysis are simple to implement, and perform well in practice compared to other existing multi-steps off-policy learning algorithms such as GQ($\lambda$)~\citep{maei2010gq} and \textsc{AB-Trace}($\lambda$)~\citep{mahmood2017multi}.

\section{Background and notation}

In reinforcement learning, an agent interacts with its environment which we model as discounted Markov Decision Process $(\S, \A, \gamma, P, r)$ with state space $\S$, action space $\A$, discount factor $\gamma \in [0, 1)$, transition probabilities $P : \S \times \A \rightarrow (\S \rightarrow [0,1])$ mapping state-action pairs to distributions over next states, and reward function $r : (\S \times \A) \rightarrow \Real$. For simplicity, we assume the state and action space are finite, but our analysis can be extended to the countable or continuous case. We denote by $\pi(a \cbar s)$ the probability of choosing action $a$ in state $s$ under the policy $\pi : \S \rightarrow (\A \rightarrow [0, 1])$. The action-value function for policy $\pi$, denoted $Q^{\pi}:\S \times \A \rightarrow \Real$, represents the expected sum of discounted rewards along the trajectories induced by the policy in the MDP: $Q^{\pi}(s, a) = \EE{\sum_{t=0}^{\infty} \gamma^t r_t \cbar (s_0, a_0)=(s, a), \pi}$.  $Q^\pi$ can be obtained as the fixed point of the Bellman operator over the action-value function $\T^\pi Q = r + \gamma P^\pi Q$ where $r$ is the expected immediate reward and $P^\pi$ is defined as:
\begin{equation*}
(P^\pi Q)(s,a) \triangleq \sum_{s' \in \S} \sum_{a' \in \A} P(s' \cbar s, a) \pi(a' \cbar s') Q(s', a') \enspace .
\end{equation*}
In this paper, we are concerned with the policy evaluation problem~\citep{sutton1998introduction} under model-free off-policy learning. That is, we will evaluate a \textit{target} policy $\pi$ using trajectories (i.e. sequences of states, actions and rewards) obtained from a different \textit{behavior} policy $\mu$. In order to obtain generalization between different state-action pairs, $Q^\pi$ should be represented in a functional form. In this paper, we focus on linear function approximation of the form:
\begin{equation*}
Q(s, a) \triangleq \theta^{\top}  \phi(s, a) \enspace ,
\end{equation*}
where $\theta \in \Theta \subset \Real^d$ is a weight vector and $\phi: \S \times \A \rightarrow \Real^d$ is a feature map from a state-action pairs to a given $d$-dimensional feature space. 

\paragraph{Off-policy learning}

~\cite{munos2016safe} provided a unified perspective on several off-policy learning algorithms, namely: those using explicit importance sampling corrections ~\citep{precup2000eligibility} as well as  \textsc{Tree Backup} (TB($\lambda$))~\citep{precup2000eligibility} and $Q(\lambda)^{\pi}$~\citep{harutyunyan2016q} which do not involve importance ratios. As a matter of fact, all these methods share a general form based on the $\lambda$-return~\citep{SuttonBarto2017} but involve different coefficients $\kappa_i$ in :
\begin{align*}
G_k^{\lambda} & \triangleq Q(s_k, a_k) + \sum_{t=k}^{\infty} (\lambda \gamma)^{t-k} \left(\prod_{i=k+1}^t \kappa_i\right) \\
& \quad \times (r_t +  \gamma \E_\pi Q(s_{t+1}, \cdot) - Q(s_t, a_t)) \\
& =  Q(s_k, a_k) + \sum_{t=k}^{\infty} (\lambda \gamma)^{t-k} \left(\prod_{i=k+1}^t \kappa_i\right) \delta_t \enspace,
\end{align*}
where $\E_\pi Q(s_{t+1}, .) \triangleq \sum_{a \in \A} \pi(a \cbar s_{t+1})Q(s_{t+1}, a)$ and $\delta_t \triangleq r_t +  \gamma \E_\pi Q(s_{t+1}, .) - Q(s_t, a_t)$ is the temporal-difference (TD) error. The coefficients $\kappa_i$ determine how the TD errors would be scaled in order to correct for the discrepancy between target and behavior policies. From this unified representation,~\citet{munos2016safe} derived the \textsc{Retrace}($\lambda$) algorithm. Both TB($\lambda$) and \textsc{Retrace}($\lambda$) consider this form of return, but set $\kappa_i$ differently. The TB($\lambda$) updates correspond to the choice $\kappa_i = \pi(a_i \cbar s_i)$ while \textsc{Retrace}($\lambda$) sets $\kappa_i = \min \left(1, \frac{\pi(a_i \cbar s_i)}{\mu(a_i \cbar s_i)} \right)$, which is intended to allow learning from full returns when the target and behavior policies are very close.
The importance sampling approach~\citep{precup2000eligibility} converges in the tabular case by correcting the behavior data distribution to the distribution that would be induced by the target policy $\pi$. However, these correction terms lead to high variance in practice. Since $Q(\lambda)$ does not involve importance ratios, this variance problem is avoided but at the cost of restricted convergence guarantees satisfied  only when the behavior and target policies are sufficiently close.

The analysis provided in this paper concerns TB($\lambda$) and \textsc{Retrace}($\lambda$), which are convergent in the tabular case, but have not been analyzed in the function approximation case. We start by noting that the Bellman operator \footnote{We overload our notation over linear operators and their corresponding matrix representation.}  $\mathcal{R}$ underlying these these algorithms can be written in the following form: 
\begin{align*}
(\R Q)(s,a) & \triangleq Q(s, a) + \E_{\mu} \Big[ \sum_{t=0}^{\infty} (\lambda \gamma)^{t} \left(\prod_{i=1}^t \kappa_i\right)  \\
& \quad \times (r_t+ \gamma \E_\pi Q(s_{t+1}, \cdot) - Q(s_t, a_t)) \Big] \\
& = Q(s,a) + (I- \lambda \gamma P^{\kappa \mu})^{-1} (\T^{\pi} Q - Q)(s, a) \enspace,
\end{align*}
where $\E_{\mu}$ is the expectation over the behavior policy and MDP transition probabilities and $P^{\kappa \mu}$ is the operator defined by:
\begin{equation*}
(P^{\kappa \mu} Q)(s, a) \triangleq \sum_{ \substack{s' \in \S \\ a' \in \A}} P(s' \cbar s, a) \mu(a' \cbar s') \kappa(s', a') Q(s', a') \enspace .
\end{equation*}

In the tabular case, these operators were shown to be contraction mappings with respect to the max norm ~\citep{precup2000eligibility,munos2016safe}. In this paper, we focus on what happens to these operators when combined with linear function approximation.

\section{Off-policy instability with function approximation}
\label{sect:offpolicyinstability}
When combined with function approximation, the temporal difference updates corresponding to the $\lambda$-return $G_k^\lambda$ are given by 
\begin{align}
\theta_{k+1} & = \theta_k + \alpha_k \left(G_k^{\lambda} - Q(s_k, a_k) \right)\nabla_{\theta} Q(s_k, a_k) \notag \\
& = \theta_k + \alpha_k \left(\sum_{t=k}^{\infty} (\lambda \gamma)^{t-k} \left(\prod_{i=k+1}^t \kappa_i\right) \delta_t^k \right)\phi(s_k, a_k)\label{eq:theta}
\end{align}
where $\delta_t^k = r_t + \gamma \theta_k^{\top}\E_{\pi}\phi(s_{t+1}, \cdot) - \theta_k^{\top} \phi(s_t, a_t)$ and $\alpha_k$ are positive non-increasing step sizes. The updates~\eqref{eq:theta} implies off-line updating as $G_k^\lambda$ is a quantity which depends on future rewards. This will be addressed later using eligibility traces: a mechanism to transform the off-line updates into efficient on-line ones. Since~\eqref{eq:theta} describes stochastic updates, the following standard assumption is necessary:

\begin{assumption}
The Markov chain induced by the behavior policy $\mu$ is ergodic and admits a unique stationary distribution, denoted by $\xi$, over state-action pairs. We write $\Xi$ for the diagonal matrix whose diagonal entries are $(\xi(s,a))_{s \in \S, a \in \A}$.
\label{ergodic}
\end{assumption}

Our first proposition establishes the expected behavior of the parameters in the limit.
\begin{proposition}\label{prop:function_approx}
If the behavior policy satisfies Assumption~\ref{ergodic} and $(\theta_k)_{k\leq 0}$ is the Markov process  defined by~\eqref{eq:theta} then:
\begin{equation*}
\E[\theta_{k+1} \cbar \theta_0] = \left(I + \alpha_k A \right)\E[\theta_k \cbar \theta_0]  + \alpha_k b \enspace ,
\end{equation*}
where matrix $A$ and vector $b$ are defined as follows:
\begin{equation*}
\begin{aligned}
A & \triangleq \Phi^\top \Xi(I - \lambda \gamma \mathit{P}^{\kappa \mu})^{-1}(\gamma \mathit{P}^{\pi} - I) \Phi \enspace, \\
b & \triangleq \Phi^\top \Xi(I - \lambda \gamma P^{\kappa \mu})^{-1}r \enspace .
\end{aligned}
\end{equation*}

\end{proposition}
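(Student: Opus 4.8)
The plan is to compute the one-step conditional expectation $\E[\theta_{k+1}\cbar\theta_k]$ in closed form and then iterate with the tower rule. First I would write the update~\eqref{eq:theta} as $\theta_{k+1} = \theta_k + \alpha_k g_k$ with $g_k = \big(\sum_{t=k}^{\infty}(\lambda\gamma)^{t-k}(\prod_{i=k+1}^{t}\kappa_i)\,\delta_t^k\big)\phi(s_k,a_k)$, and condition first on the pair $(s_k,a_k)$. Under Assumption~\ref{ergodic}, taking the behavior chain in its stationary regime we have $(s_k,a_k)\sim\xi$; moreover, by the Markov property the future trajectory $(r_k,s_{k+1},a_{k+1},r_{k+1},\dots)$ is conditionally independent of $\theta_k$ given $(s_k,a_k)$, so the inner conditional expectation only sees the frozen vector $\theta_k$.

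The key step is to recognize that this inner expectation is exactly an application of the operator $\R$. Indeed, by time-homogeneity of the Markov chain induced by $\mu$, shifting all time indices down by $k$ gives
\begin{align*}
\E\Big[\textstyle\sum_{t=k}^{\infty}(\lambda\gamma)^{t-k}\big(\prod_{i=k+1}^{t}\kappa_i\big)\delta_t^k \;\cbar\; s_k=s,\, a_k=a,\, \theta_k\Big] = (\R Q_{\theta_k} - Q_{\theta_k})(s,a),
\end{align*}
with $Q_{\theta_k} \triangleq \Phi\theta_k$, directly from the definition of $\R$ stated above. Interchanging $\E$ with the infinite sum here is legitimate by dominated convergence: for both TB($\lambda$) and \textsc{Retrace}($\lambda$) one has $\kappa_i\le 1$, so $\prod_{i=k+1}^{t}\kappa_i\le 1$, the factor $(\lambda\gamma)^{t-k}$ decays geometrically, and with bounded rewards and features the summand is dominated by a summable geometric sequence.

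Next I would invoke the closed form $\R Q - Q = (I-\lambda\gamma P^{\kappa\mu})^{-1}(\T^\pi Q - Q)$ and linearize in $\theta_k$: $\T^\pi Q_{\theta_k} - Q_{\theta_k} = r + \gamma P^\pi\Phi\theta_k - \Phi\theta_k = r + (\gamma P^\pi - I)\Phi\theta_k$. Averaging over $(s_k,a_k)\sim\xi$ and using that $\E_{(s,a)\sim\xi}[v(s,a)\,\phi(s,a)] = \Phi^\top\Xi\, v$ for any vector $v$ indexed by state-action pairs (with $\Phi$ the matrix whose rows are the $\phi(s,a)^\top$), we obtain
\begin{align*}
\E[g_k \cbar \theta_k] &= \Phi^\top\Xi(I-\lambda\gamma P^{\kappa\mu})^{-1}\big(r + (\gamma P^\pi - I)\Phi\theta_k\big) = A\,\theta_k + b .
\end{align*}
Hence $\E[\theta_{k+1}\cbar\theta_k] = (I+\alpha_k A)\theta_k + \alpha_k b$; taking $\E[\,\cdot\cbar\theta_0]$ of both sides and using linearity together with the tower property yields the claim.

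The main obstacle I expect is the bookkeeping around the operator identification — making the index shift precise via time-homogeneity and rigorously justifying the exchange of expectation with the infinite series — together with the care needed to ensure $(s_k,a_k)\sim\xi$ and that $\theta_k$ is conditionally independent of the future trajectory given $(s_k,a_k)$. The latter is immediate for the online (eligibility-trace) form of the algorithm discussed later, but for the offline update as literally written it relies on the standard stationarity convention for the behavior chain.
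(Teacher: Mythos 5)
Your proposal is correct and follows essentially the same route as the paper's proof: compute the expected update conditional on $\theta_k$ with $(s_k,a_k)\sim\xi$, reduce the inner expectation to the $\R$-operator applied to $\Phi\theta_k$, and read off $A\theta_k+b$ via $\Phi^\top\Xi$. The only difference is cosmetic — you invoke the closed form $\R Q - Q = (I-\lambda\gamma P^{\kappa\mu})^{-1}(\T^\pi Q - Q)$ stated in the background, whereas the appendix re-derives that geometric-series identity explicitly by iterated conditioning; your caveat about the conditional independence of $\theta_k$ from the future trajectory is a point the paper itself glosses over at the same level of rigor.
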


\begin{proof}[Sketch of Proof (The full proof is in the appendix)]
\begin{equation*}
\begin{aligned}
\theta_{k+1}
& = \theta_k + 
\alpha_k \Big(\sum_{t=k}^{\infty} (\lambda \gamma)^{t-k} \left(\prod_{i=k+1}^t \kappa_i\right) \phi(s_k, a_k)  \\
& \quad \times  \left( [\gamma \mathbb{E}_{\pi} \phi(x_{t+1}, \cdot) - \phi(x_t, a_t)]^\top \theta_k 
  + r_t \right) \Big) \\
& = \theta_k + \alpha_k \left(A_k \theta_k+ b_k \right) \enspace .
\end{aligned}
\end{equation*}
So, $\E[\theta_{k+1} \cbar \theta_k] = (I + \alpha_k A) \theta_k + \alpha_k b$ where $A =\E[A_k]$ and $b = \E[b_k]$ 
\end{proof}

The ODE (Ordinary Differential Equations) approach~\citep{Borkar2000} is the main tool to establish convergence in the function approximation case~\citep{bertsekas1995neuro,tsitsiklis1997analysis}. In particular, we use  Proposition 4.8 in ~\citet{bertsekas1995neuro}, which states that under some conditions, $\theta_k$ converges to the unique solution $\theta^*$ of the system $A \theta^* + b = 0$. This crucially relies on the matrix $A$ being negative definite i.e $ y^\top A y < 0, \forall y \neq 0$. In the on-policy case, when $\mu = \pi$, we rely on the fact that the stationary distribution is invariant under the transition matrix $P^{\pi}$ i.e $d^\top P^{\pi} = d^\top$~\citep{tsitsiklis1997analysis,sutton2015emphatic}. However, this is no longer true for off-policy learning with arbitrary target/behavior policies and the matrix $A$ may not be negative definite: the series $\theta_k$ may then diverge.  We will now see that the same phenomenon may occur with TB($\lambda$) and \textsc{Retrace}($\lambda$).

\paragraph{Counterexample:} We extend the two-states MDP of~\citet{tsitsiklis1997analysis}, originally proposed to show the divergence of off-policy TD(0), to the case of function approximation over state-action pairs. This environment has only two states, as shown in Figure~\ref{fig:example}, and two actions: left or right.  

\begin{figure}[H]
\centering
\begin{tikzpicture}[->,>=stealth',auto,node distance=3cm,
  thick,main node/.style={circle,draw,font=\sffamily\Large\bfseries}]

  \node[main node] (1) {1};
  \node[main node] (2) [right of=1] {2};

  \path[every node/.style={font=\sffamily\small}]
    (1) edge [bend right] node [right] {} (2)
    (2) edge [bend right] node [right] {} (1)
    (2) edge [loop] node {} (2)
    (1) edge [loop] node {} (1);
\end{tikzpicture}
\caption{\label{fig:example} Two-state counterexample. We assign the features 
$\{ (1,0)^\top, (2, 0)^\top, (0, 1)^\top, (0, 2)^\top \}$ to the state-action pairs $\{ (1, \text{right}), (2, \text{right}), (1, \text{left}), (2, \text{left})\}$. The target policy is given by $\pi(\text{right} \cbar \cdot) = 1 $ and the behavior policy is $\mu(\text{right} \cbar \cdot) = 0.5$}
\end{figure}
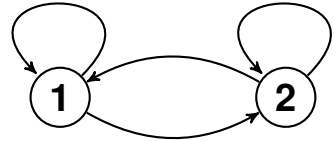
In this particular case, both TB($\lambda$) and \textsc{Retrace}($\lambda$) share the same matrix $P^{\kappa \mu}$
and $P^{\kappa \mu} = 0.5 P^{\pi}$:
\begin{equation*}
P^{\pi} =  \left (
\begin{matrix}
0 & 1 & 0 & 0\\
0 & 1 & 0 & 0 \\
1 & 0 & 0 & 0 \\
1 & 0 & 0 & 0
\end{matrix}
\right ), 
(P^{\pi})^{n} = \left (
\begin{matrix}
0 & 1 & 0 & 0\\
0 & 1 & 0 & 0 \\
0 & 1 & 0 & 0 \\
0 & 1 & 0 & 0
\end{matrix}
\right ) \forall n \geq 2
\end{equation*}
If we set $\beta := 0.5 \gamma \lambda$, we then have:
\begin{align*}
(I - \lambda \gamma P^{\kappa \mu} )^{-1}
 & = 
\left (
\begin{matrix}
1 & \frac{\beta}{1-\beta} & 0 & 0\\
0 & \frac{1}{1-\beta}  & 0 & 0 \\
\beta & \frac{\beta^2}{1-\beta}   & 1 & 0 \\
\beta & \frac{\beta^2 }{1-\beta} & 0 & 1
\end{matrix}
\right ), \\
A & = \left(
\begin{matrix}
\frac{6 \gamma - \beta -5}{1- \beta}& 0 \\
\frac{3 (\gamma \beta - \beta^2 - \beta - \gamma)}{1-\beta} & -5 \\
\end{matrix}\right)\enspace .
\end{align*}
Therefore, $\forall \, \gamma \in (\frac{5}{6}, 1)$ and $\forall \, \lambda \in [0, \min(1, \frac{12 \gamma -10}{\gamma}))$, the first eigenvalue $e_1 = \frac{6 \gamma - \beta -5}{1- \beta}$ of $A$ is positive. The basis vectors $(1,0)^\top$ and $(0,1)^\top$ are eigenvectors of A associated with $e_1$ and -5, then if $\theta_0 = (\eta_1, \eta_2)^\top$,
we obtain $
\E[\theta_{k} \cbar \theta_0] = (\eta_1\prod_{i=0}^{k-1}(1 + \alpha_i  e_1), \eta_2 \prod_{i=0}^{k-1}(1 -5 \alpha_i))^\top  $
implying that $|| \E[\theta_{k} \cbar \theta_0] || \geq |\eta_1| \prod_{i=0}^{k-1}(1 + \alpha_i e_1)$. Hence, as $\sum_{k}\alpha_k \rightarrow \infty$, $|| \E[\theta_{k} \cbar \theta_0] || \rightarrow \infty$ if $\eta_1 \neq 0$.

\section{Convergent gradient off-policy algorithms}

If $A$ were to be negative definite, \textsc{Retrace}($\lambda$) or TB($\lambda$) with function approximation would converge to $\theta^*= -A^{-1}b$. It is known~\citep{bertsekas2011temporal} that $\Phi \theta^*$ is the fixed point of the projected Bellman operator :
\begin{equation*}
\Phi \theta^* = \Pi^{\mu} \mathcal{R}(\Phi \theta^*) \enspace ,
\end{equation*}
where $\Pi^{\mu} = \Phi (\Phi^\top \Xi \Phi)^{-1}\Phi^\top \Xi$ is the orthogonal projection onto the space $S = \{ \Phi \theta | \theta \in \mathbb{R}^d \}$ with respect to the weighted Euclidean norm $||.||_{\Xi}$. 
Rather than computing the sequence of iterates given by the projected Bellman operator, another approach for finding 
$\theta^*$ is to directly minimize~\citep{Sutton2009,liu2015finite} the Mean Squared Projected Bellman Error (MSPBE):
\begin{equation*}
\begin{aligned}
\mathbf{MSPBE}(\theta) = \frac{1}{2}||\Pi^{\mu} \mathcal{R}(\Phi \theta)- \Phi \theta ||^2_{\Xi} \enspace .
\end{aligned}
\end{equation*}

This is the route that we take in this paper to derive convergent forms of TB($\lambda$) and \textsc{Retrace}($\lambda$).
To do so, we first define our objective function in terms of $A$ and $b$ which we introduced in Proposition \ref{prop:function_approx}.
\begin{proposition}\label{prop:MSPBE}
Let $M \triangleq \Phi^\top \Xi \Phi = \E[\Phi \Phi^\top]$ be the covariance matrix of features. We have:
\begin{equation*}
\mathbf{MSPBE}(\theta) = \frac{1}{2} || A \theta + b||^2_{M^{-1}}
\end{equation*}
(The proof is provided in the appendix.)
\end{proposition}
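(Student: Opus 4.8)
The plan is to reduce the definition $\mathbf{MSPBE}(\theta) = \frac12\|\Pi^{\mu}\R(\Phi\theta) - \Phi\theta\|^2_{\Xi}$ to a quadratic form in $A\theta+b$ in three short steps, using only that $\Pi^{\mu}$ is the $\Xi$-orthogonal projection onto the column space of $\Phi$ together with the closed form of $\R$ recalled in the background section.

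First I would observe that $\Phi\theta$ lies in $S = \{\Phi\theta : \theta\in\Real^d\}$, which $\Pi^{\mu}$ leaves invariant, so $\Pi^{\mu}\Phi\theta = \Phi\theta$ and hence $\Pi^{\mu}\R(\Phi\theta) - \Phi\theta = \Pi^{\mu}\big(\R(\Phi\theta)-\Phi\theta\big)$. Next, writing $\Pi^{\mu} = \Phi M^{-1}\Phi^\top\Xi$ (well defined once $\Phi$ has full column rank, so that $M = \Phi^\top\Xi\Phi$ is invertible), I would compute, for an arbitrary vector $y$,
\[
\|\Pi^{\mu} y\|^2_{\Xi} = y^\top\Xi\Phi M^{-1}\Phi^\top\Xi\Phi M^{-1}\Phi^\top\Xi y = (\Phi^\top\Xi y)^\top M^{-1}(\Phi^\top\Xi y) = \|\Phi^\top\Xi y\|^2_{M^{-1}},
\]
where the cross term collapses via $M^{-1}\Phi^\top\Xi\Phi M^{-1} = M^{-1}$. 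Taking $y = \R(\Phi\theta)-\Phi\theta$ gives $\mathbf{MSPBE}(\theta) = \frac12\|\Phi^\top\Xi\big(\R(\Phi\theta)-\Phi\theta\big)\|^2_{M^{-1}}$.

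It then remains to identify $\Phi^\top\Xi\big(\R(\Phi\theta)-\Phi\theta\big)$ with $A\theta+b$. Using $\R Q = Q + (I-\lambda\gamma P^{\kappa\mu})^{-1}(\T^{\pi}Q - Q)$ with $Q=\Phi\theta$ and $\T^{\pi}\Phi\theta - \Phi\theta = r + (\gamma P^{\pi} - I)\Phi\theta$, I get
\[
\R(\Phi\theta)-\Phi\theta = (I-\lambda\gamma P^{\kappa\mu})^{-1}\big(r + (\gamma P^{\pi} - I)\Phi\theta\big),
\]
so that $\Phi^\top\Xi\big(\R(\Phi\theta)-\Phi\theta\big) = \Phi^\top\Xi(I-\lambda\gamma P^{\kappa\mu})^{-1}(\gamma P^{\pi} - I)\Phi\,\theta + \Phi^\top\Xi(I-\lambda\gamma P^{\kappa\mu})^{-1}r = A\theta + b$ by the definitions of $A$ and $b$ in Proposition~\ref{prop:function_approx}. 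Substituting back yields $\mathbf{MSPBE}(\theta) = \frac12\|A\theta+b\|^2_{M^{-1}}$.

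This is essentially routine linear algebra; the only points that require care are the invertibility of $M$ (the implicit full column rank assumption on $\Phi$) and the bookkeeping of the weighted norms — it is precisely because $\Pi^{\mu}$ is orthogonal with respect to $\|\cdot\|_{\Xi}$ rather than the Euclidean norm that the middle product telescopes through $M^{-1}MM^{-1}=M^{-1}$. I do not anticipate a genuine obstacle.
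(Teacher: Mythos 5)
Your proposal is correct and follows essentially the same route as the paper's appendix proof: use $\Pi^{\mu}\Phi\theta = \Phi\theta$ to pull the projection out, expand $\Pi^{\mu} = \Phi(\Phi^\top\Xi\Phi)^{-1}\Phi^\top\Xi$ so the weighted norm collapses to $\frac{1}{2}\|\Phi^\top\Xi(\R(\Phi\theta)-\Phi\theta)\|^2_{M^{-1}}$, then substitute the closed form of $\R$ to identify $A\theta+b$. Your explicit mention of the implicit full-column-rank assumption on $\Phi$ (invertibility of $M$) is a welcome clarification but does not change the argument.
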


In order to derive parameter updates, we could compute gradients of the above expression explicitly as in~\citet{sutton2009convergent}, but we would then obtain a gradient that is a product of expectations. The implied double sampling makes it difficult to obtain an unbiased estimator of the gradient.~\citet{sutton2009convergent} addressed this problem with a two-timescale stochastic approximations. However, the algorithm obtained in this way is no longer a true stochastic gradient method with respect to the original objective.
~\citet{liu2015finite} suggested an alternative which converts the original minimization problem into a primal-dual saddle-point problem. This is the approach that we chose in this paper.

The convex conjugate of a real-valued function $f$ is defined as:
\begin{equation}
f^*(y)  = \sup_{x \in \mathit{X}} ( \langle y, x \rangle  - f(x))\enspace ,
\end{equation}
and $f$ is convex, we have $f^{**} = f$.
Also, if $f(x) = \frac{1}{2} ||x||_{M^{-1}}$, then $f^{*}(x) = \frac{1}{2}||x||_{M}$. Note that by going to the convex conjugate, we do not need to invert matrix $M$. 
We now go back to the original minimization problem:
\begin{equation*}
\begin{aligned}
\min_{\theta}\mathbf{MSPBE}(\theta) & \Leftrightarrow \min_{\theta} \frac{1}{2} || A \theta + b||_{M^{-1}}^2 \\
& \Leftrightarrow \min_{\theta} \max_{\omega} \left( \langle A\theta+b, \omega \rangle - \frac{1}{2} || \omega||_{M}^2 \right)
\end{aligned}
\end{equation*}
The gradient updates resulting from the saddle-point problem (ascent in $\omega$ and descent in $\theta$) are then:
\begin{equation}
\begin{aligned}
\omega_{k+1} & = \omega_k + \eta_k (A\theta_k + b - M\omega_k) \enspace, \\
\theta_{k+1} & = \theta_k - \alpha_k A^\top \omega_k \enspace.
\end{aligned}
\label{eq:saddle}
\end{equation}
where $\{\eta_k\}$ and $\{\alpha_k\}$ are non-negative step-size sequences.
As the $A$, $b$ and $M$ are all expectations, we can derive stochastic updates by drawing samples, which would yield unbiased estimates of the gradient.

\paragraph{On-line updates:} We now derive on-line updates  by exploiting equivalences in expectation between forward views and backward views outlined in~\citet{maei2011gradient}. 
\begin{proposition} \label{prop:eligibility}
Let $e_k$ be the eligibility traces vector, defined as $e_{-1} = 0$ and :
\begin{equation*}
e_k  = \lambda \gamma \kappa(s_k, a_k) e_{k-1} + \phi(s_k, a_k) \quad \forall k \geq 0\enspace .
\end{equation*}
Furthermore, let 
$\hat{A}_k = e_k (\gamma \mathbb{E}_{\pi}[\phi(s_{k+1}, .)] - \phi(s_k, a_k)])^\top,  \quad
\hat{b}_k  =  r(s_k, a_k) e_k,  \quad
\hat{M}_k  = \phi(s_k, a_k) \phi(s_k, a_k)^\top
$.
Then, we have $\E[\hat{A}_k] = A$, $\E[\hat{b}_k] = b$ and $\E[\hat{M}_k] = M$.

(The proof is provided in the appendix.)
\end{proposition}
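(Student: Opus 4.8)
The plan is to reduce all three identities to a single conditioning computation, using the explicit unrolled form of the eligibility trace together with the Neumann series for $(I-\lambda\gamma P^{\kappa\mu})^{-1}$. First I would unroll the recursion $e_{-1}=0$, $e_k=\lambda\gamma\kappa(s_k,a_k)e_{k-1}+\phi(s_k,a_k)$ to obtain
\begin{equation*}
e_k=\sum_{j=0}^{k}(\lambda\gamma)^{k-j}\Big(\prod_{i=j+1}^{k}\kappa(s_i,a_i)\Big)\phi(s_j,a_j)\enspace,
\end{equation*}
with the empty product equal to $1$, so that $\hat A_k=\sum_{j=0}^{k}(\lambda\gamma)^{k-j}\big(\prod_{i=j+1}^{k}\kappa_i\big)\,\phi(s_j,a_j)\big(\gamma\E_\pi\phi(s_{k+1},\cdot)-\phi(s_k,a_k)\big)^\top$ and $\hat b_k=\sum_{j=0}^{k}(\lambda\gamma)^{k-j}\big(\prod_{i=j+1}^{k}\kappa_i\big)\,r(s_k,a_k)\,\phi(s_j,a_j)$.

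The heart of the argument is the identity, valid for any $g:\S\times\A\to\Real$,
\begin{equation*}
\E_{\mu}\Big[\Big(\prod_{i=j+1}^{k}\kappa(s_i,a_i)\Big)g(s_k,a_k)\,\Big|\,s_j,a_j\Big]=\big((P^{\kappa \mu})^{k-j}g\big)(s_j,a_j)\enspace,
\end{equation*}
which follows by induction on $k-j$ from the Markov property of the chain induced by $\mu$ and the one-step relation $\E_\mu[\kappa(s_{i+1},a_{i+1})g(s_{i+1},a_{i+1})\mid s_i,a_i]=(P^{\kappa \mu}g)(s_i,a_i)$. For $\hat A_k$ I would first take the conditional expectation given the history through $(s_k,a_k)$ to replace $\gamma\E_\pi\phi(s_{k+1},\cdot)-\phi(s_k,a_k)$ by $((\gamma P^\pi-I)\phi)(s_k,a_k)$ (using $\E_\mu[\E_\pi\phi(s_{k+1},\cdot)\mid s_k,a_k]=(P^\pi\phi)(s_k,a_k)$), apply the displayed identity componentwise, and finally average over $(s_j,a_j)\sim\xi$ (Assumption~\ref{ergodic}), which after reindexing $m=k-j$ gives
\begin{equation*}
\E[\hat A_k]=\sum_{m=0}^{k}(\lambda\gamma)^m\,\Phi^\top\Xi\,(P^{\kappa \mu})^m(\gamma P^\pi-I)\Phi\enspace .
\end{equation*}
Since $\kappa\le 1$ for both TB($\lambda$) and \textsc{Retrace}($\lambda$), $P^{\kappa \mu}$ is sub-stochastic and $\lambda\gamma<1$, so the partial sums converge to $\Phi^\top\Xi(I-\lambda\gamma P^{\kappa \mu})^{-1}(\gamma P^\pi-I)\Phi=A$ as $k\to\infty$. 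The same computation with $g\equiv r$ (the expected immediate reward) gives $\E[\hat b_k]\to b$, and $\E[\hat M_k]=\sum_{s,a}\xi(s,a)\phi(s,a)\phi(s,a)^\top=\Phi^\top\Xi\Phi=M$ is immediate.

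I expect the main obstacle to be purely a matter of care with stationarity and truncation rather than anything conceptual: the forward-view/backward-view equivalence holds exactly only in the stationary regime, so the statement should be read as $\E[\hat A_k]\to A$ (or with the chain started from $\xi$), and one must check that truncating the trace at time $0$ — equivalently truncating the Neumann series at level $k$ — leaves only a tail that vanishes as $k\to\infty$. The other delicate point is ordering the nested conditional expectations correctly: the innermost one over $a_{k+1}$ and the reward, producing $P^\pi$ and $r$, and the outer one over the intermediate actions $a_{j+1},\dots,a_k$, producing the powers of $P^{\kappa \mu}$; these compose correctly via the tower property, but it is worth spelling out. Everything else is linearity of expectation and the routine induction above.
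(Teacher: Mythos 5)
Your argument is correct, but it takes a different route from the paper and, in doing so, is actually more explicit about a point the paper glosses over. The paper proves the identity by starting from the expected \emph{forward} view (the infinite sum $\sum_{t\ge k}(\lambda\gamma)^{t-k}(\prod_{i=k+1}^t\kappa_i)\phi(s_k,a_k)\Delta_t$ whose expectation is $A$ by the proof of Proposition~\ref{prop:function_approx}) and repeatedly shifts time indices using stationarity of $\xi$ (the step marked $(\star)$ in the appendix), which re-packages the sum as $\E[\Delta_k e_k]=\E[\hat A_k]$; this yields the exact equality $\E[\hat A_k]=A$, but only because the truncation of the trace at time $0$ versus the infinite forward sum is silently absorbed into a ``$\dots$'' (strictly, exactness needs the trace to extend over an infinite past, or $k\to\infty$). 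You instead never touch the forward view: you unroll $e_k$, push conditional expectations through with the Markov-property identity $\E_\mu[(\prod_{i=j+1}^k\kappa_i)\,g(s_k,a_k)\mid s_j,a_j]=((P^{\kappa\mu})^{k-j}g)(s_j,a_j)$ (which is essentially the same computation the paper uses inside its proof of Proposition~\ref{prop:function_approx}, not of this proposition), average over $(s_j,a_j)\sim\xi$, and recognize the truncated Neumann series $\sum_{m=0}^k(\lambda\gamma)^m\Phi^\top\Xi(P^{\kappa\mu})^m(\gamma P^\pi-I)\Phi$, concluding $\E[\hat A_k]\to A$ and likewise for $\hat b_k$, with $\E[\hat M_k]=M$ immediate. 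What the paper's route buys is a statement of exact equality in the stationary regime (as used downstream in the convergence analysis); what your route buys is a self-contained computation that makes the time-$0$ truncation and the resulting asymptotic (or infinite-past) reading explicit --- the caveat you flag at the end is precisely the gap the paper's ``$\dots$'' hides, so your treatment is, if anything, the more careful one.
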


This proposition allows us to replace the expectations in Eq.~\eqref{eq:saddle} by corresponding unbiased estimates. The resulting detailed procedure is provided in Algorithm~\ref{algo:eligibility}.

\begin{algorithm}
\caption{\label{algo:eligibility}Gradient Off-policy with eligibility traces }
\begin{algorithmic}
\medskip
\item[\textbf{Given:}] target policy $\pi$, behavior policy $\mu$
\STATE Initialize $\theta_0$ and $\omega_0$
\FOR {n = 0 \ldots}	
	\STATE set $e_0 = 0$
  	\FOR {k = 0 \ldots  end of episode}
    	\STATE Observe $s_k, a_k, r_k, s_{k+1}$ according to $\mu$
    	\STATE \textbf{Update traces}
    	\STATE $e_k = \lambda \gamma \kappa(s_k, a_k) e_{k-1} + \phi(s_k, a_k)$
    	\STATE \textbf{Update parameters}
    	\STATE $\delta_k = r_k + \gamma \theta_{k}^\top \mathbb{E}_{\pi}\phi(s_{k+1}, .) - \theta_{k}^\top\phi(s_k, a_k)$
    	\STATE $\omega_{k+1}  = \omega_{k}+ \eta_k \left( \delta_k e_k 
    - \omega_{k}^\top \phi(s_k, a_k) \phi(s_k, a_k)\right)$
    	\STATE $\theta_{k+1} = \theta_{k} - \alpha_k \omega_{k}^\top e_k \left(\gamma \mathbb{E}_{\pi}\phi(s_{k+1}, .) - \phi(s_k, a_k)\right) $
  	\ENDFOR
\ENDFOR
\end{algorithmic}
\end{algorithm}


\section{Convergence Rate Analysis}
In order to characterize the convergence rate of the algorithm \ref{algo:eligibility}, we need to introduce some new notations and state new assumptions.

We denote by $\|A\| \triangleq \sup_{\|x\|=1}\| Ax\|$ the spectral norm of the matrix A and by $c(A) = \|A\| \| A^{-1}\|$ its condition number. If the eigenvalues of a matrix $A$ are real, we use $\lambda_{\max}(A)$ and $\lambda_{\min}(A)$ to denote respectively the largest and the smallest eigenvalue.

If we set $\eta_k = \beta \alpha_k$ for a positive constant $\beta$, it is possible to combine the two iterations present in our algorithm as a single iteration using a parameter vector $z_k \triangleq  \left( \begin{matrix}
\theta_k \\
\frac{1}{\sqrt{\beta}}\omega_k
\end{matrix} \right) $ where : 
\begin{equation*}
    z_{k+1} = z_k - \alpha_k (\hat{G}_k z_k - \hat{g}_k)
\end{equation*}
where:
\begin{equation*}
\hat{G}_k \triangleq \left( \begin{matrix}
0 & \sqrt{\beta} \hat{A}_k^\top \\
- \sqrt{\beta} \hat{A}_k & \beta \hat{M}_k
\end{matrix}
\right) 
\quad 
\hat{g}_k \triangleq  \left( \begin{matrix}
0 \\
\sqrt{\beta} \hat{b}_k
\end{matrix} \right) 
\end{equation*}
Let $G \triangleq \E\left[ \hat{G}_k\right]$ and $g = \E\left[ \hat{g}_k\right]$. It follows from the proposition \ref{prop:eligibility} that $G$ and $g$ are well defined and more specifically:
\begin{equation*}
G = \left( \begin{matrix}
0 & \sqrt{\beta} A^\top \\
- \sqrt{\beta} A & \beta M
\end{matrix} \right) 
\quad 
g = \left( \begin{matrix}
0 \\
\sqrt{\beta} b
\end{matrix} \right)
\end{equation*}
Furthermore, let $\mathcal{F}_k = \sigma(z_0, \hat{G}_0, \hat{g}_0 \ldots, z_k, \hat{G}_k, \hat{g}_k, z_{k+1})$ be the sigma-algebra generated by the variables up to time $k$. With these definitions, we can now state our assumptions.

\begin{assumption} \label{assump:non-singularity} The matrices $A$ and $M$ are nonsingular. This implies that the saddle-point problem admits a unique solution $(\theta^{\star}, \omega^{\star}) = (-A^{-1}b, 0)$ and we define $z^{\star} \triangleq (\theta^{\star}, \frac{1}{\sqrt{\beta}}\omega^{\star})$. 
\end{assumption}
\begin{assumption} \label{assump:boundness} The features and reward functions are uniformly bounded. This implies that the features and rewards have uniformly bounded second moments. It follows that there exists a constant $\sigma$ such that:
\begin{equation*}
    \E[\| \hat{G}_k z_k - \hat{g}_k\|^2 | \mathcal{F}_{k-1}] \leq \sigma^2 (1 + \| z_k\|^2)
\end{equation*}
\end{assumption}
Before stating our main result, the following key quantities needs to be defined:
\begin{equation*}
\rho \triangleq \lambda_{\max}(A^{\top}M^{-1}A), \quad
\delta \triangleq \lambda_{\min}(A^{\top} M^{-1}A ),
\end{equation*}
\begin{equation*}
L_G \triangleq \Big\| \E \left[ \hat{G}_k^{\top} \hat{G}_k\cbar \mathcal{F}_{k-1} \right] \Big\|
\end{equation*}
The following proposition characterize the convergence in expectation of $\| z_k - z^{\star}\|^2 = \| \theta_k - \theta^{\star} \|^2 + \frac{1}{\beta} \|w_k\|^2$ 

\begin{proposition} \label{prop:convergence-rate}
Suppose assumptions \ref{assump:non-singularity} and \ref{assump:boundness} holds and
if we choose $\beta = \frac{8 \rho}{\lambda_{\min}(M)}$ and $\alpha_k
= \frac{9^2 \times 2 \delta}{8 \delta^2 (k+2) + 9^2 \zeta}$ where $\zeta =2 \times 9^2 c(M)^2 \rho^2
+ 32 c(M) L_G$. Then the mean square error $\E \left[ \| z_k - z^{\star} \|^2\right]$ is upper bounded by:
$$
    9^2 \times 8 c(M) \Big\{\frac{(8\delta + 9\zeta)^2 \E \left[ \| z_{0} - z^{\star} \|^2\right] }{(8^2\delta^2 k + 9^2\zeta)^2 }
+ \frac{8 \sigma^2 (1 + \| z^{\star}\|^2)}{  (8^2\delta^2 k + 9^2\zeta) } \Big\}
$$

\end{proposition}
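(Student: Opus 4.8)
The plan is to treat the combined iteration $z_{k+1} = z_k - \alpha_k(\hat G_k z_k - \hat g_k)$ as a perturbed/stochastic fixed-point iteration for the linear system $Gz = g$, whose unique solution is $z^\star$ by Assumption~\ref{assump:non-singularity}, and to run a standard ``noisy contraction'' recursion on $a_k \triangleq \E[\|z_k - z^\star\|^2]$. Writing $z_{k+1} - z^\star = (I - \alpha_k G)(z_k - z^\star) - \alpha_k(\hat G_k z_k - \hat g_k - G z_k + g)$, taking conditional expectation kills the cross term (the noise is a martingale difference given $\mathcal F_{k-1}$), so $a_{k+1} \le \E[\|(I - \alpha_k G)(z_k - z^\star)\|^2] + \alpha_k^2\,\E[\|\hat G_k z_k - \hat g_k\|^2]$. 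The variance term is controlled by Assumption~\ref{assump:boundness} as $\alpha_k^2\sigma^2(1 + \|z_k\|^2) \le 2\alpha_k^2\sigma^2(1 + \|z^\star\|^2) + 2\alpha_k^2\sigma^2 a_k$, and for small enough $\alpha_k$ the $a_k$ piece gets absorbed into the contraction factor.

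The crux is lower-bounding the symmetric part of $G$ so that $\|(I-\alpha_k G)x\|^2 \le (1 - c\,\alpha_k)\|x\|^2$ for an explicit $c>0$. Here $G$ is \emph{not} symmetric, so I would compute $x^\top G x = \beta\, (\omega\text{-block})^\top M (\omega\text{-block})$ — the skew blocks $\pm\sqrt\beta A^\top, \mp\sqrt\beta A$ cancel — which is only positive semidefinite (it vanishes on $\omega = 0$). To get strict positivity of the symmetrized quadratic form after one step one must use the off-diagonal coupling: the standard trick is to bound $\lambda_{\min}$ of the symmetric part of $G^\top G$ (equivalently of $(I-\alpha G)^\top(I-\alpha G)$) in terms of $\delta = \lambda_{\min}(A^\top M^{-1}A)$, $\rho = \lambda_{\max}(A^\top M^{-1}A)$, and $\lambda_{\min}(M)$. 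This is precisely where the specific choice $\beta = 8\rho/\lambda_{\min}(M)$ comes from: it balances the $\beta M$ term against the $\beta A A^\top$-type terms so that the Schur-complement-style estimate yields $\|(I-\alpha_k G)x\|^2 \le (1 - \alpha_k \delta/(\text{const}) + \alpha_k^2 L_G')\|x\|^2$ with the factor $9^2$ and $c(M)$ appearing from crude but explicit constant tracking. I would carry out this eigenvalue estimate as a separate lemma: choose $\beta$, expand $(I-\alpha G)^\top(I-\alpha G) = I - \alpha(G + G^\top) + \alpha^2 G^\top G$, and show $v^\top(G+G^\top)v \ge (\text{something})\, v^\top v$ using the $2\times 2$ block structure and the identity that the ``bad'' direction $\omega=0$ is rescued because $G$ maps it into the $\omega$-block with magnitude $\sqrt\beta\|A\theta\|$.

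Once the one-step inequality $a_{k+1} \le (1 - \alpha_k \delta' + \alpha_k^2 C_1) a_k + \alpha_k^2 C_2(1 + \|z^\star\|^2)$ is in hand (with $\delta'$ proportional to $\delta/(c(M))$ and $C_1$ built from $c(M)^2\rho^2$ and $c(M) L_G$ — this is where $\zeta$ enters), the final step is the classical deterministic recursion lemma for stepsizes of the form $\alpha_k = c_1/(c_2 k + c_3)$: one verifies by induction that $a_k \le \text{RHS}$ with the claimed $O(1/k)$ rate, tuning $c_1, c_2, c_3$ so that $\alpha_k^2 C_1 \le \tfrac12 \alpha_k \delta'$ for all $k$ (this forces $c_3 \gtrsim \zeta$) and so that the decay rate $1 - \alpha_k\delta'/2$ matches. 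The numerology ($9^2$, the $8$'s) is just the explicit output of that induction with these constants; I would not grind it but would state the induction hypothesis and the two inequalities that make the step go through.

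The main obstacle I expect is the non-symmetry of $G$: getting a \emph{clean, explicit} constant in $\|(I - \alpha_k G)x\|^2 \le (1 - c\alpha_k)\|x\|^2$ requires the right choice of the rescaling $\beta$ and a careful block computation, since the naive bound only gives semidefiniteness. Everything downstream (absorbing the variance, the induction on $a_k$) is routine once that estimate is pinned down with the stated $\beta$, $\rho$, $\delta$, $L_G$, $c(M)$.
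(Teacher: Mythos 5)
There is a genuine gap at the heart of your plan: the one-step contraction you need, $\|(I-\alpha_k G)x\|^2 \le (1-c\,\alpha_k)\|x\|^2$ in the \emph{Euclidean} norm, is false for this $G$, and no choice of $\beta$ or Schur-complement estimate can repair it. As you yourself note, the symmetric part of $G$ is $\mathrm{diag}(0,\beta M)$, so for $x=(\theta,0)$ with $\theta\neq 0$ one has $x^\top (G+G^\top)x=0$ while $\|Gx\|^2=\beta\|A\theta\|^2>0$, hence
\begin{equation*}
\|(I-\alpha G)x\|^2=\|x\|^2-\alpha\,x^\top (G+G^\top)x+\alpha^2\|Gx\|^2=\|x\|^2+\alpha^2\beta\|A\theta\|^2>\|x\|^2
\end{equation*}
for every $\alpha>0$. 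So $\|I-\alpha G\|>1$ for all $\alpha$, and the inequality $v^\top(G+G^\top)v\ge c\,\|v\|^2$ you propose to prove as a lemma cannot hold ($G+G^\top$ has a genuine kernel on the $\theta$-block). The off-diagonal coupling does ensure that the \emph{spectrum} of $G$ is good, but that only controls the spectral radius of $I-\alpha G$, not its Euclidean operator norm; the ``rescue'' you invoke does not materialize in a single step measured in $\|\cdot\|_2$, so the recursion on $a_k=\E\|z_k-z^\star\|^2$ cannot be closed the way you describe.

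The paper's proof gets around exactly this point by changing coordinates rather than symmetrizing: with $\beta=8\rho/\lambda_{\min}(M)$, results of Du et al.\ (2017) (building on Benzi and Simoncini) show $G=Q\Lambda Q^{-1}$ is diagonalizable with \emph{real positive} eigenvalues satisfying $\lambda_{\min}(G)\ge \tfrac{8}{9}\delta$, $\lambda_{\max}(G)\le 9c(M)\rho$, and $c(Q)^2\le 8c(M)$. The recursion is then run on $\E\|Q^{-1}(z_k-z^\star)\|^2$, where $\|I-\alpha_k\Lambda\|^2\le 1-2\alpha_k\tfrac{8}{9}\delta+\alpha_k^2 9^2 c(M)^2\rho^2$ \emph{does} give the desired contraction (the martingale cross term vanishes as you say, and the noise is split into $\hat G_k\Delta_k$, handled via $L_G$ and $c(Q)$, plus $\hat G_k z^\star-\hat g_k$, handled by Assumption~\ref{assump:boundness}); only at the very end does one convert back to the Euclidean norm, paying the factor $c(Q)^2\le 8c(M)$ — this, not ``crude constant tracking,'' is where the leading $c(M)$ in the bound comes from. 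Your step-size tuning and induction sketch for the final recursion are fine and essentially match the paper, but the key lemma must be stated in the $\|Q^{-1}\cdot\|$ (or an equivalent Lyapunov) norm, not in $\|\cdot\|_2$.
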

\begin{proof}[Sketch of Proof (The full proof is in the appendix)]
The beginning of our proof relies on ~\citet{du2017stochastic} which shows the linear convergence rate of deterministic primal-dual gradient method for policy evaluation. More precisely, we make use of the spectral properties of matrix $G$ shown in the appendix of this paper. The rest of the proof follows a different route exploiting the structure of our problem.
\end{proof}
The above proposition \ref{prop:convergence-rate} shows that the mean square error $\E \left[ \| z_k - z^{\star} \|^2\right]$ at iteration $k$ is upper bounded by  tow terms. The first bias term tells that the initial error $\E \left[ \| z_0 - z^{\star} \|^2\right]$ is forgotten at a rate $O(1/k^2)$ and the constant depends on the condition number of the covariance matrix $c(M)$. The second variance term shows that noise is rejected at a rate $O(1/k)$ and the constant depends on the variance of estimates $\sigma^2$ and $c(M)$. The overall convergence rate is $O(1/k)$.

\paragraph{Existing stochastic saddle-point problem results:}

\citet{chen2014optimal} provides a comprehensive review of stochastic saddle-point problem. When the objective function is convex-concave, the overall convergence rate is $O(1/\sqrt{k})$. Although several accelerated techniques could improve the dependencies on the smoothness constants of the problem in their convergence rate, the dominant term that depends on the gradient variance still decays only as $O(1/\sqrt{k})$.

When the objective function is strongly convex-concave, ~\citet{rosasco2016stochastic} and~\citet{palaniappan2016stochastic} showed that stochastic forward-backward algorithms can achieve $O(1/k)$ convergence rate. Algorithms in this class are feasible in practice only if their proximal mappings can be computed efficiently. In our case, our objective function is strongly concave because of the positive-definiteness of $M$ but is otherwise not strongly convex. Because our algorithms are vanilla stochastic gradient methods, they do not rely on proximal mappings.

\paragraph{Singularity:} If assumption \ref{assump:non-singularity} does not hold, the matrix $G$ is singular and either $G z + g = 0$ has infinitely many solutions or it has no solution. In the case of many solutions, we could still get asymptotic convergence. In \citet{wang2013stabilization}, it was shown that under some assumptions on the null space of matrix $G$ and using a simple stabilization scheme, the iterates converge to the Drazin \citep{Drazin1958} inverse solution of $Gz+g = 0$. However, it is not clear how extend our finite-sample analysis because the spectral analysis of the matrix $G$ \citep{benzi2006eigenvalues} in our proof assumes that the matrices $A$ and $M$ are nonsingular.

\section{Related Work and Discussion}

\begin{table*} 
\begin{center}
\centerline{
\resizebox{1.\textwidth}{!}{
\begin{tabular}{|p{4cm} p{5cm} p{4cm} p{4cm} p{5cm}| }
 \hline
 Paper & step-sizes & Projection &  Polyak averaging & Convergence rate\\
 \hline
 ~\citet{sutton2009convergent}, ~\citet{sutton2009fast} & $\eta_k = \beta \alpha_k, \beta > 0,$ \newline
 $\sum_{k}\alpha_k = \infty, \sum_k \alpha_k^2 < \infty$ & \textcolor{green}{No} & \textcolor{green}{No} & $\theta_k \rightarrow \theta^{\star}$ with probability one \\
  \hline
 ~\citet{liu2015finite} & constant step-size, $\alpha_k=\eta_k$ & \textcolor{red}{Yes} & \textcolor{red}{Yes} & $ \mathbf{MSPBE}(\bar{\theta}_k) \in O(1/\sqrt{k})$ with high probability\\
  \hline
 ~\citet{wang2017finite} & $\alpha_k=\eta_k, \sum_k \alpha_k = \infty, $ \newline $\frac{\sum_k \alpha_k^2}{\sum_k \alpha_k} < \infty$ & \textcolor{red}{Yes} & \textcolor{red}{Yes} & $\mathbf{MSPBE}(\bar{\theta}_k) \in O(\frac{\sum_k \alpha_k^2}{\sum_k \alpha_k})$ with high probability \\
  \hline
 ~\citet{lakshminarayanan2017linear} & constant step-size, $\alpha_k=\eta_k$ & \textcolor{green}{No} & \textcolor{red}{Yes} & $\E[\| \bar{\theta}_k - \theta^{\star}\|^2] \in O(1/k)$ \\
  \hline
  ~\citet{dalal2017finite} & $\alpha_k = \frac{1}{k^{1-c}}, \eta_k = \frac{1}{k^{(2/3)(1-c)}}$ \newline  where $c \in (0, 1)$ & \textcolor{red}{Yes} & \textcolor{green}{No} & $\| \theta_k - \theta^{\star}\| \in O(k^{-\frac{1}{3} + \frac{c}{3} })$ with high probability \\
  \hline
 Our work & $\eta_k = \beta \alpha_k, \beta > 0, \alpha_k \in O(1/k)$ & \textcolor{green}{No} & \textcolor{green}{No} & $\E[\| \theta_k - \theta^{\star}\|^2] \in O(1/k)$ \\
 \hline
\end{tabular}
} 
} 
\caption{
\label{table: rates} Convergence results for gradient-based TD algorithms shown in previous work~\citep{sutton2009fast, sutton2009convergent, liu2015finite, wang2017finite, lakshminarayanan2017linear, dalal2017finite}. $\bar{\theta}_k$ stand for the Polyak-average of iterates: $\bar{\theta}_k \triangleq \frac{\sum_k \alpha_k \theta_k}{\sum_k \alpha_k}$. Our algorithms achieve $O(1/k)$ without the need for projections or Polyak averaging.
}
\end{center}
\vspace*{-4mm}
\end{table*}

\paragraph{Convergent \textsc{Retrace}:} ~\citet{mahmood2017multi} have recently introduced the ABQ($\zeta$) algorithm which uses an action-dependent bootstrapping parameter that leads to off-policy multi-step learning without importance sampling ratios. They also derived a gradient-based algorithm called \textsc{AB-Trace}($\lambda$) which is related to \textsc{Retrace}($\lambda$). However, the resulting updates are different from ours, as they use the two-timescale approach of~\citet{Sutton2009} as basis for 
their derivation. In contrast, our approach uses the saddle-point formulation, avoiding the need for double sampling. Another benefit of this formulation is that it allows us to provide a bound of the convergence rate (proposition~\ref{prop:convergence-rate}) whereas~ \citet{mahmood2017multi} is restricted to a more general two-timescale asymptotic result from~\citet{Borkar2000}. The saddle-point formulation also provides a rich literature on acceleration methods which could be incorporated in our algorithms. Particularly in the batch setting, ~\citet{du2017stochastic} recently introduced Stochastic Variance Reduction methods for state-value estimation combining GTD with SVRG~\citet{johnson2013accelerating} or SAGA~\citet{defazio2014saga}. This work could be extended easily to ours algorithms in the batch setting. 
\paragraph{Existing Convergence Rates:} Our convergence rate result~\ref{prop:convergence-rate} can apply to GTD/GTD2 algorithms. Recall that GTD/GTD2 are off-policy algorithms designed to estimate the state-value function using temporal difference TD(0) return while our algorithms compute the action-value function using \textsc{Retrace} and \textsc{Tree Backup} returns. In both GTD and GTD2, the quantities $\hat{A}_k$ and $\hat{b}_k$  involved in their updates are the same and equal to $\hat{A}_k = \phi(s_k) (\gamma \phi(s_{k+1}) - \phi(s_k))^{\top}$, 
$\hat{b}_k = r(s_k, a_k) \phi(s_k)$ while the matrix $\hat{M}_k$ is equal to $\phi(s_k) \phi(s_k)^{\top}$ for GTD2 and to identity matrix for GTD.\\
The table \ref{table: rates} show in chronological order the convergence rates established in the literature of Reinforcement learning. GTD was first introduced in ~\citet{sutton2009convergent} and its variant GTD2 was introduced later in ~\citet{sutton2009fast}. Both papers established the asymptotic convergence with  Robbins-Monro step-sizes. Later,~\citet{liu2015finite} provided the first sample complexity  by reformulating GTD/GTD2 as an instance of mirror stochastic approximation ~\citep{nemirovski2009robust}.~\citet{liu2015finite} showed that with high probability, $ \mathbf{MSPBE}(\bar{\theta}_k) \in O(1/\sqrt{k})$ where $\bar{\theta}_k \triangleq \frac{\sum_k \alpha_k \theta_k}{\sum_k \alpha_k}$. However, they studied an alternated version of GTD/GTD2 as they added a projection step into bounded convex set and Polyak-averaging of iterates.~\citet{wang2017finite} studied also the same version as ~\citet{liu2015finite} but for the case of Markov noise case instead of the \textit{i.i.d} assumptions. They prove that with high probability $\mathbf{MSPBE}(\bar{\theta}_k) \in O(\frac{\sum_k \alpha_k^2}{\sum_k \alpha_k})$ when the step-size sequence satisfies $\sum_k \alpha_k = \infty, \frac{\sum_k \alpha_k^2}{\sum_k \alpha_k} < \infty$. The optimal rate achieved in this setup is then $O(1/\sqrt{k})$. Recently,~\citet{lakshminarayanan2017linear} improved on the existing results by showing for the first time that $\E[\| \bar{\theta}_k - \theta^{\star}\|^2] \in O(1/k)$ without projection step. However, the result still consider the Polyak-average of iterates. Moreover, the constants in their bound depend on the data distribution that are difficult to relate to the problem-specific constants, such as those present in our bound \ref{prop:convergence-rate}. Finally, \citet{dalal2017finite} studied sparsily projected version of GTD/GTD2 and they showed that for step-sizes $\alpha_k = \frac{1}{k^{1-c}}, \eta_k = \frac{1}{k^{(2/3)(1-c)}}$  where $c \in (0, 1)$, $\| \theta_k - \theta^{\star}\| \in O(k^{-\frac{1}{3} + \frac{c}{3} })$ with high probability. The projection is called sparse as they project only on iterations which are powers of 2.
\\
Our work is the first to provide a finite-sample complexity analysis of GTD/GTD2 in its original setting, i.e without assumption a projection step or Polyak-averaging and with diminishing step-sizes.

\section{Experimental Results}

\paragraph{Evidence of instability in practice:}
To validate our theoretical results about instability, we implemented TB($\lambda$), \textsc{Retrace}($\lambda$) and compared them against their gradient-based counterparts GTB($\lambda$) and G\textsc{Retrace}($\lambda$) derived in this paper. The first one is the $2$-states counterexample that we detailed in the third section and the second is the $7$-states versions of Baird's counterexample~\citep{baird1995residual}. Figures \ref{fig:baird} and \ref{fig:2-state} show the MSBPE (averaged over $20$ runs) as a function of the number of iterations. We can see that our gradient algorithms converge in these two counterexamples whereas TB($\lambda$) and \textsc{Retrace}($\lambda$) diverge.

\begin{figure}[ht]
\centering
\includegraphics[width=0.45\linewidth]{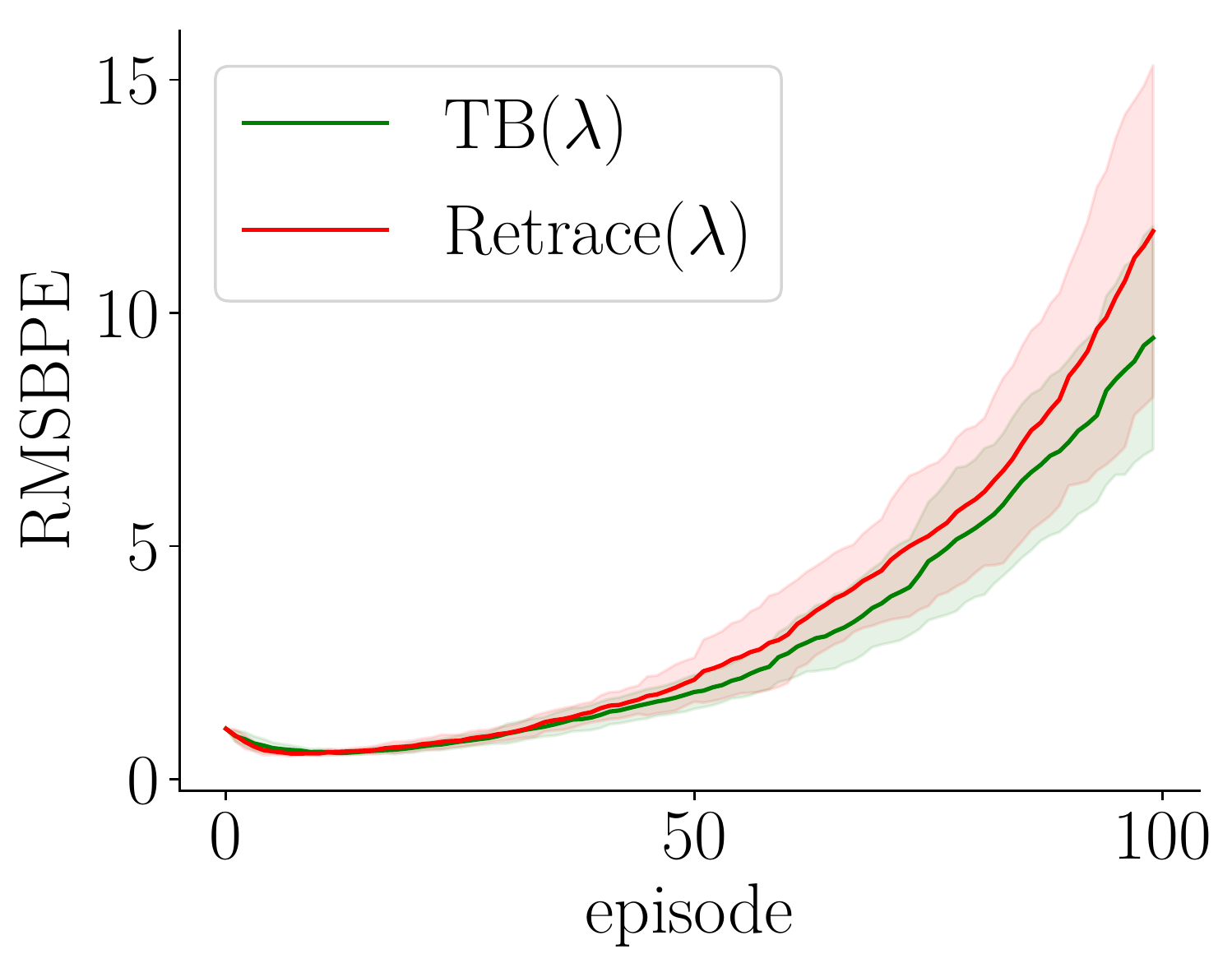}\hspace{2em}
\includegraphics[width=0.45\linewidth]{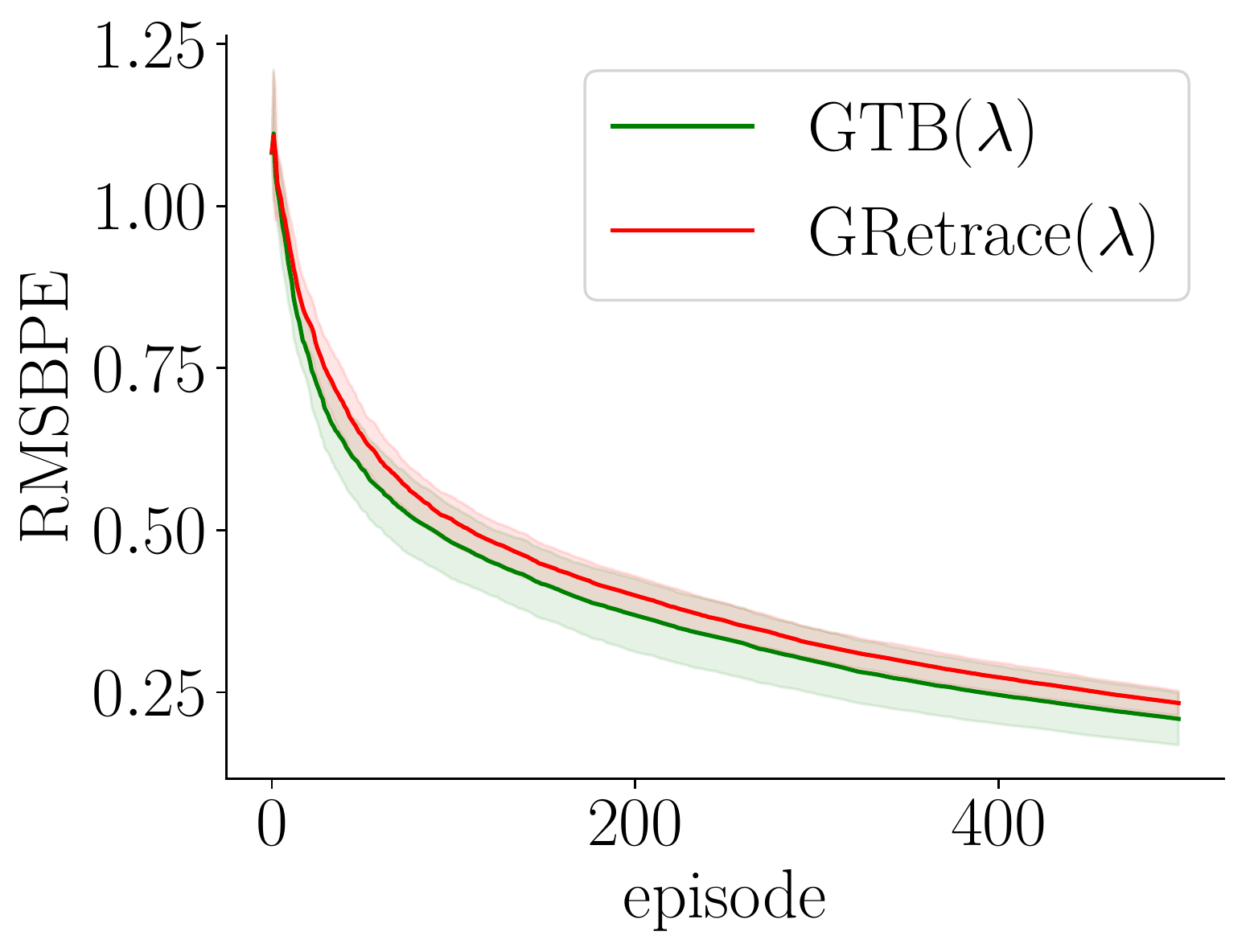}
\caption{\label{fig:baird}Baird's counterexample. The combination of linear function approximation with TB and \textsc{Retrace} leads to divergence (left panel) while the proposed gradient extensions GTB and G\textsc{Retrace} converge (right panel).}
\end{figure}

\begin{figure}[ht]
\centering
\includegraphics[width=0.45\linewidth]{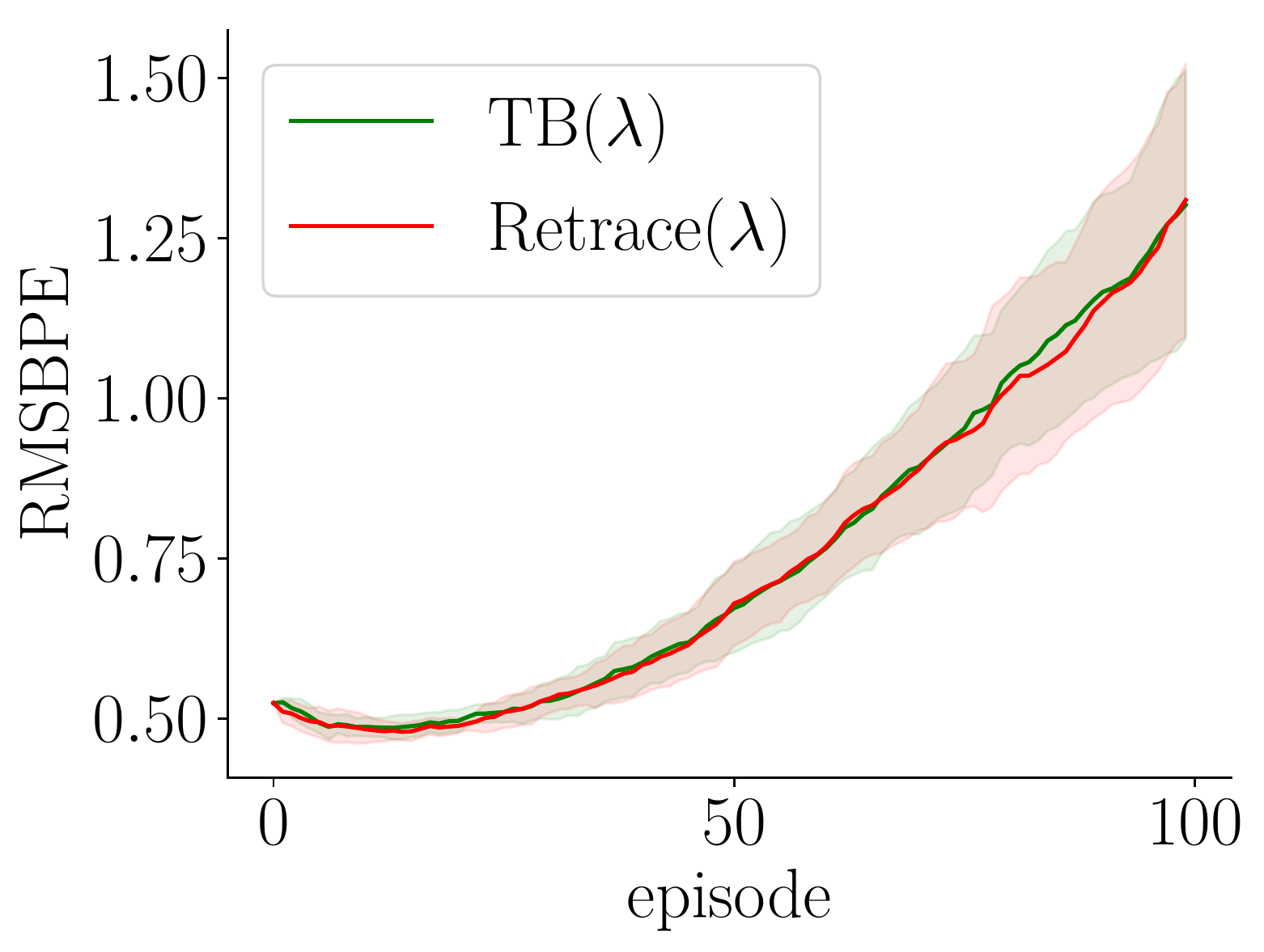}\hspace{2em}
\includegraphics[width=0.45\linewidth]{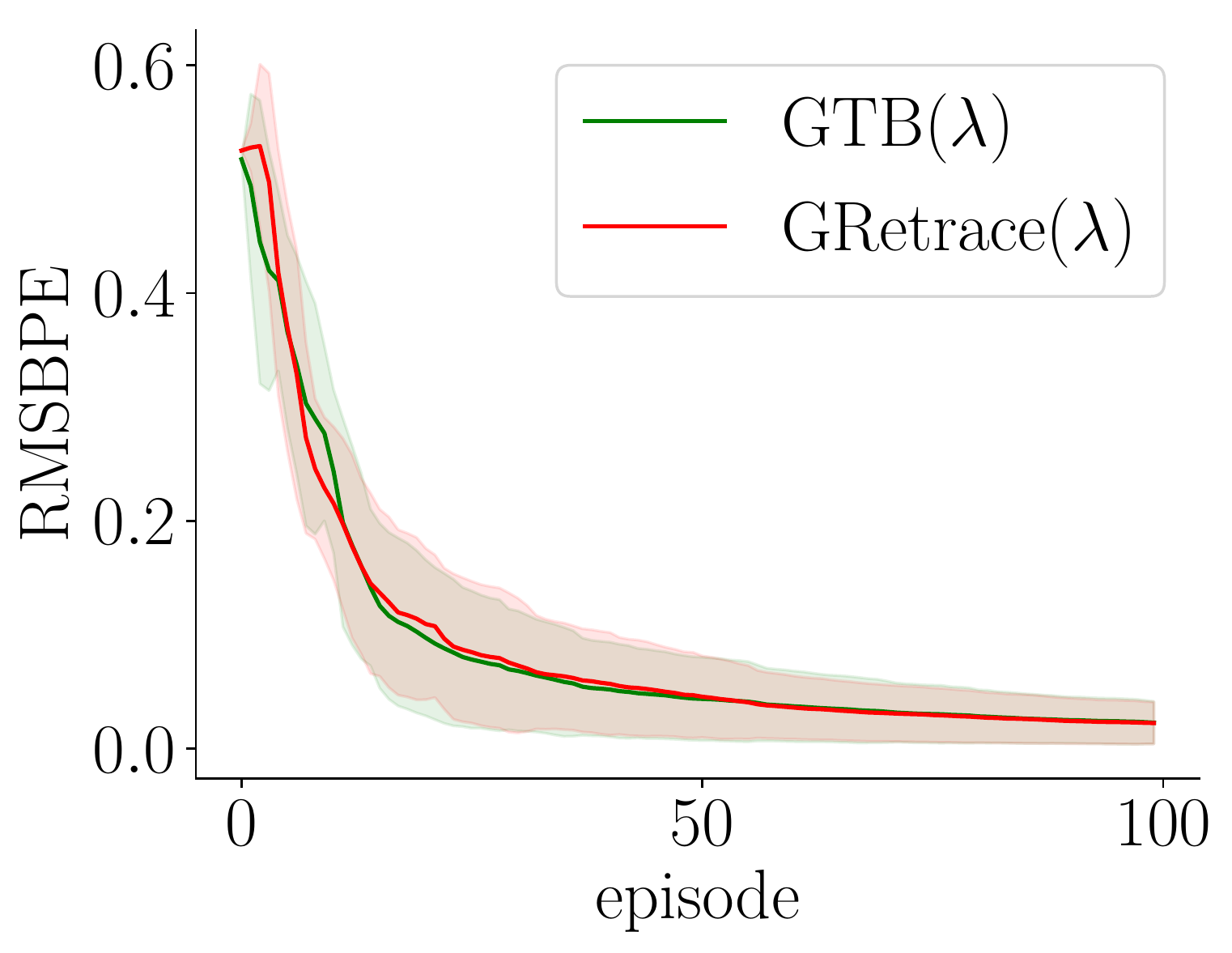}
\caption{\label{fig:2-state}In the $2$-states counterexample of section \ref{sect:offpolicyinstability} showing that the gradient-based TB and \textsc{Retrace} converge while TB and \textsc{Retrace} diverge.}
\end{figure}

\paragraph{Comparison with existing methods:} We also compared GTB($\lambda$) and G\textsc{Retrace}($\lambda$) with two recent state-of-the-art convergent off-policy algorithms for action-value estimation and function approximation: GQ($\lambda$)~\citep{maei2011gradient} and \textsc{AB-Trace}($\lambda$)~\citep{mahmood2017multi}. As in ~\citet{mahmood2017multi}, we also consider a policy evaluation task in the Mountain Car domain. In order to better understand the variance inherent to each method, we designed the target policy and behavior policy in such a way that the importance sampling ratios can be as large as $30$. We chose to describe state-action pairs by a $96$-dimensional vector of features derived by tile coding \citep{sutton1998introduction}. We ran each algorithm over all possible combinations of step-size values $(\alpha_k, \eta_k) \in [0.001, 0.005, 0.01, 0.05, 0.1]^2$ for $2000$ episodes and reported their normalized mean squared errors (NMSE):
\begin{equation*}
    \mathbf{NMSE}(\theta) = \frac{\|\Phi \theta - Q^{\pi} \|^2_{\Xi}}{ \| Q^{\pi} \|^2_{\Xi}}
\end{equation*}
where $Q^{\pi}$ is estimated by simulating the target policy and averaging the discounted cumulative rewards overs trajectories.
As \textsc{AB-Trace}($\lambda$) and G\textsc{Retrace}($\lambda$) share both the same operator, we can evaluate them using the empirical $\mathbf{MSPBE}= \frac{1}{2} ||\hat{A}\theta + \hat{b}||^2_{\hat{M}^{-1}}$ where $\hat{A}$, $\hat{b}$ and $\hat{M}$ are Monte-Carlo estimates obtained by averaging $\hat{A}_k$, $\hat{b}_k$ and $\hat{M}_k$ defined in proposition \ref{prop:eligibility} over $10000$ episodes.\\
Figure \ref{fig:MSPBE_min} shows that the best empirical $\mathbf{MSPBE}$ achieved by \textsc{AB-Trace}($\lambda$) and G\textsc{Retrace}($\lambda$) are almost identical across value of $\lambda$. This result is consistent with the 
fact that they both minimize the $\mathbf{MSPBE}$ objective function. However, significant differences can be observed when computing the 5th percentiles of NMSE (over all possible combination of step-size values) for different values of $\lambda$ in Figure \ref{fig:NMSE_quantile}. 
When $\lambda$ increases, the NMSE of GQ($\lambda$) increases sharply due to increased influence of importance sampling ratios. This clearly demonstrate the variance issues of GQ($\lambda$) in contrast with the other methods based on the \textsc{Tree Backup} and \textsc{Retrace} returns (that are not using importance ratios). For intermediate values of $\lambda$, \textsc{AB-Trace}($\lambda$) performs better but its performance is matched by G\textsc{Retrace}($\lambda$) and TB($\lambda$) for small and very large values of $\lambda$. In fact, \textsc{AB-Trace}($\lambda$) updates the function parameters $\theta$ as follows:
\begin{equation*}
    \theta_{k+1} = \theta_{k} - \alpha_k \left( \delta_k e_k - \Delta_k \right) 
\end{equation*}
where $\Delta_k \triangleq \gamma w_{k}^\top e_k (\mathbb{E}_{\pi}\phi(s_{k+1}, .) - \lambda \sum_a \kappa(s_k, a) \mu(a \cbar s_k) \phi(s_k, a))$ is a gradient correction term. When the instability is not an issue, the correction term could be very small and the update of $\theta$ would be essentially $\theta_{k+1} \sim \theta_{k} - \alpha_k \delta_k e_k$ so that $\theta_{k+1}$ follows the semi-gradient of the mean squared error $\| \Phi \theta - G^{\lambda}_k\|^2_{\Xi}$.\\
To better understand the errors of each algorithm and their robustness to step-size values, we propose the box plots shown in Figure \ref{fig:NMSE_boxplot}. Each box plot shows the distribution of NMSE obtained by each algorithm for different values of $\lambda$. NMSE distributions are computed over all possible combinations of step-size values. GTB($\lambda$) has the smallest variance as it scaled its return by the target probabilities which makes it conservative in its update even with large step-size values. G\textsc{Retrace}($\lambda$) tends to more more efficient than GTB($\lambda$) since it could benefit from full returns. The latter observation agrees with the tabular case of  \textsc{Tree Backup} and \textsc{Retrace}~\citep{munos2016safe}. Finally, we observe that \textsc{AB-Trace}($\lambda$) has lower error, but at the cost of increased variance with respect to step-size values.
\begin{figure}
\centering
\includegraphics[width=0.9\linewidth]{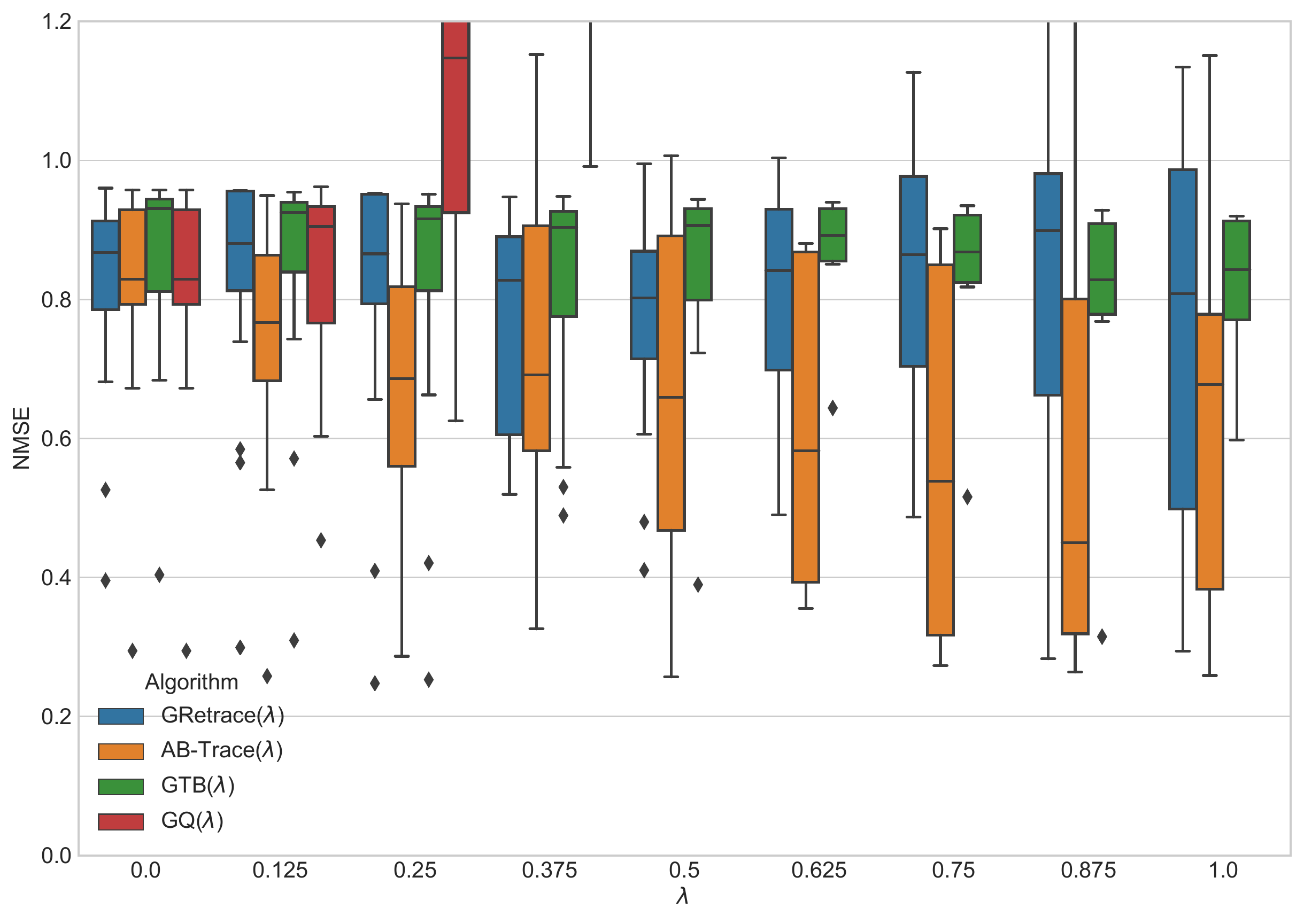}
\caption{\label{fig:NMSE_boxplot} Comparison of empirical performance of GQ($\lambda$), \textsc{AB-Trace}($\lambda$), G\textsc{Retrace}($\lambda$) and GTB($\lambda$) on an off-policy evaluation task in Mountain Car domain. Each box plot shows the distribution of the NMSE achieved by each algorithm after 2000 episodes for different values of $\lambda$. NMSE distributions are computed over all the possible combinations of step-size values $(\alpha_k, \eta_k) \in [0.001, 0.005, 0.01, 0.05, 0.1]^2$.}
\end{figure}

\begin{figure}[h]
\centering
\includegraphics[width=0.9\linewidth]{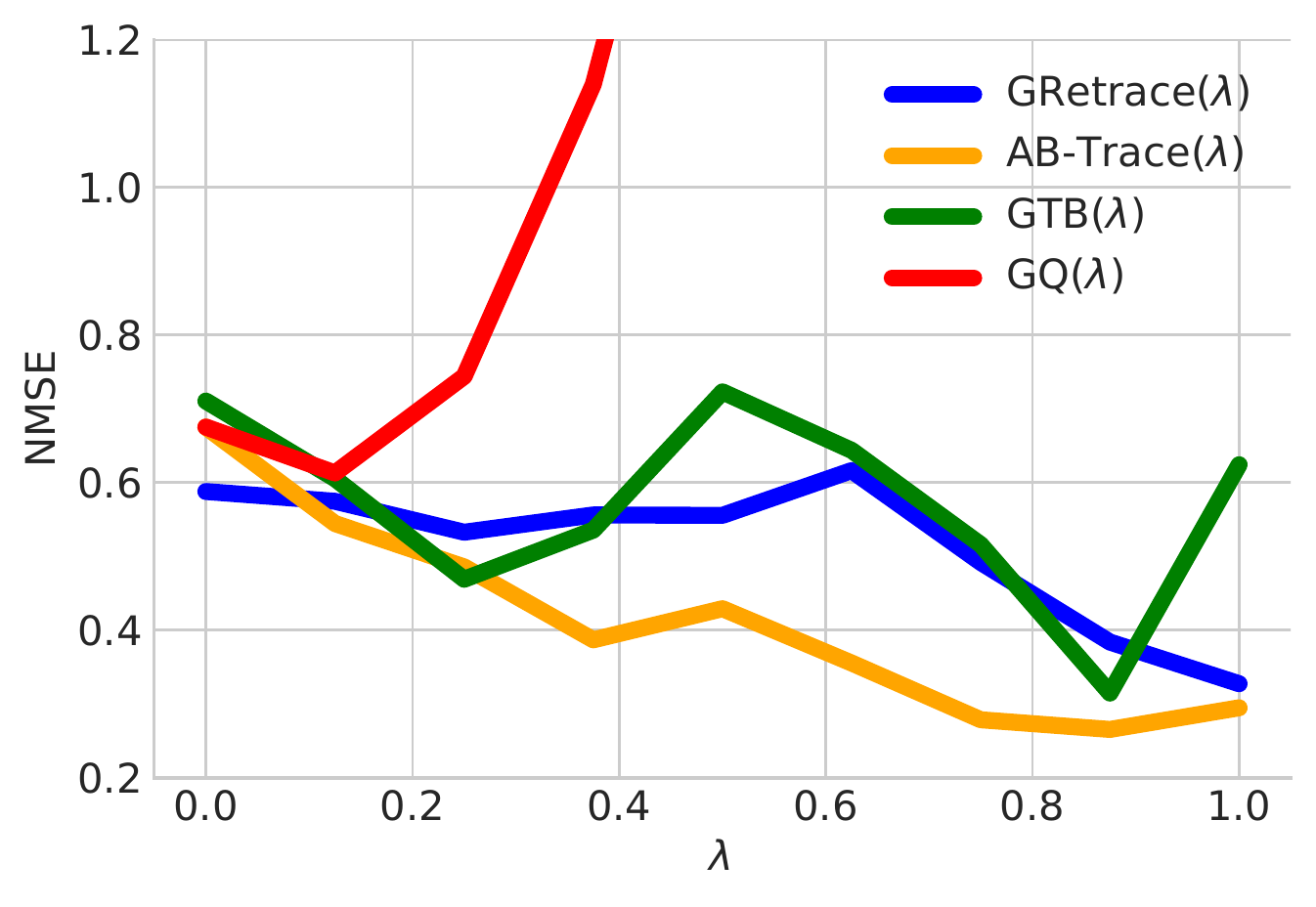}
\caption{\label{fig:NMSE_quantile} Each curves shows the 5th percentile of NMSE (over all possible combination of step-size values) achieved by each algorithm for different values of $\lambda$.}
\end{figure}

\begin{figure}
\centering
\includegraphics[width=0.9\linewidth]{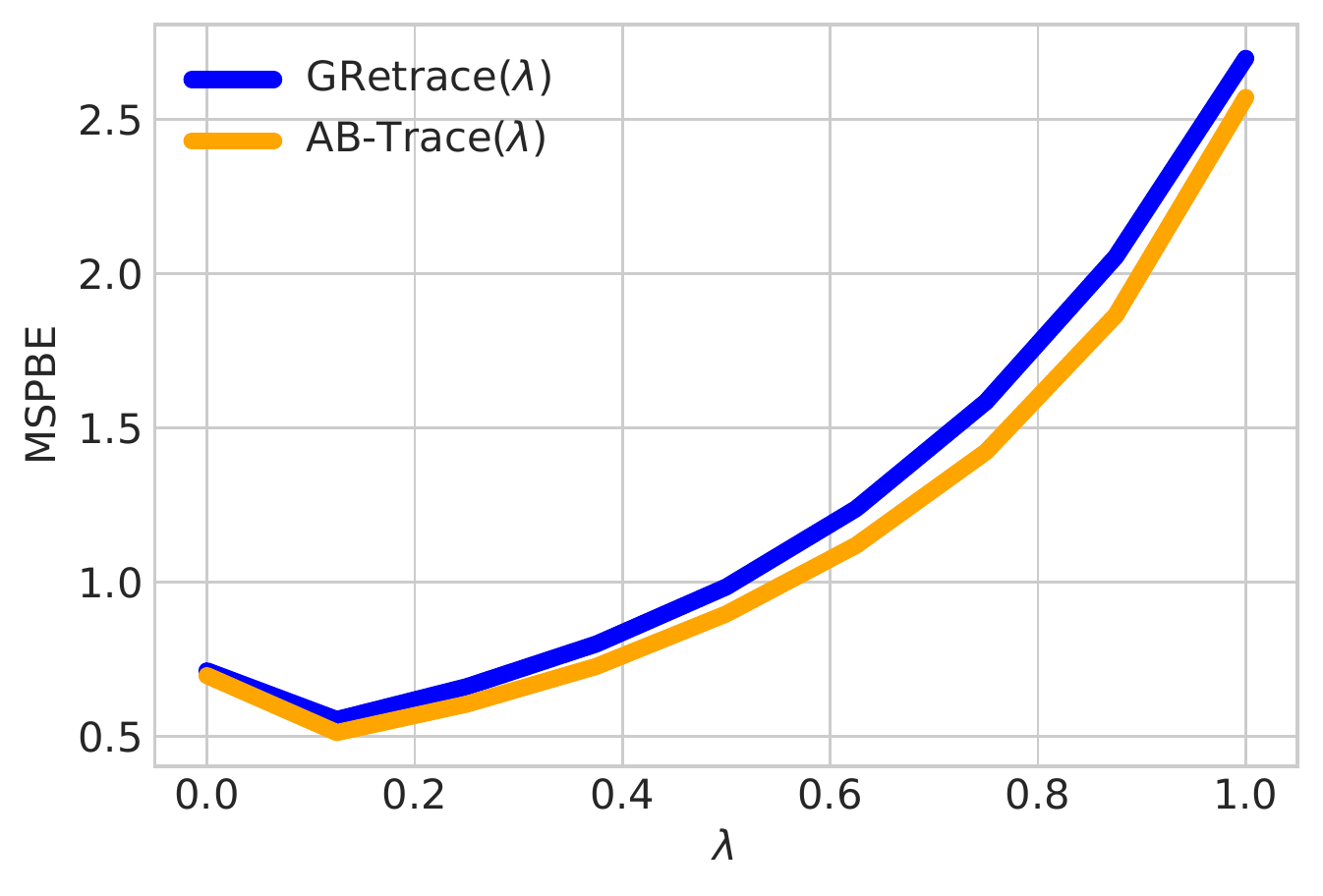}
\caption{\label{fig:MSPBE_min} Comparison between the best empirical MSPBE obtained by each algorithm for different values of $\lambda$. Only G\textsc{Retrace}($\lambda$) and \textsc{AB-Trace}($\lambda$) are showed here because the other algorithms do not have the same operators and hence not the same $\mathbf{MSPBE}$. Note that $\mathbf{MSPBE}$s depend on $\lambda$. Thus, $\mathbf{MSPBE}$s are not directly comparable across different values of $\lambda$. Both G\textsc{Retrace}($\lambda$) and \textsc{AB-Trace}($\lambda$) have very similar performances. \textsc{AB-Trace}($\lambda$) performs slightly better.}
\end{figure}

\section{Conclusion}
Our analysis highlighted for the first time the difficulties of combining the \textsc{Tree Backup} and \textsc{Retrace} algorithms with function approximation. We addressed these issues by formulating gradient-based algorithm versions of these algorithms which minimize the mean-square projected Bellman error. Using a saddle-point formulation, we were also able to provide convergence guarantees and  characterize the convergence rate of our algorithms GTB and G\textsc{Retrace}. We also developed a novel analysis method which allowed us to establish a $O(1/k)$ convergence rate without having to use Polyak averaging or projections (which might also make implementation more difficult). Furthermore, our proof technique is general enough that we were able to apply it to the existing GTD and GTD2 algorithms. Our experiments finally suggest that the proposed GTB($\lambda$) and G\textsc{Retrace} ($\lambda$) are robust to step-size selection and have less variance than both GQ($\lambda$) \citep{maei2011gradient} and \textsc{AB-Trace}($\lambda$) \citep{mahmood2017multi}.

\bibliographystyle{icml2018}
\bibliography{lib}


\clearpage

\onecolumn  

\appendix

\section{Proof of Proposition \ref{prop:function_approx}}
We compute  $\E[A_k]$ and $\E[b_k]$ where expectation are over trajectories drawn by executing the behavior policy: $s_k, a_k, r_k, s_{k+1}, \ldots s_t, a_t, r_t, s_{t+1} \ldots$ where $s_k, a_k \sim d, \quad r_t =  r(s_t, a_t), \quad s_{t+1} \sim p(\cdot \cbar s_t, a_t)$. We note that under stationarity of $d$, $\E[A_k] = E[A_0] $ and $\E[b_k] = \E[b_0]$. Let $\theta, \theta' \in \R^d$ and let $Q = \Phi \theta$ and $Q' = \Phi \theta'$ their respective Q-functions. 
\begin{equation*}
\begin{aligned}
\theta'^{\top} \E[A_k] \theta & = \E\left[\sum_{t=0}^{\infty} (\lambda \gamma)^{t} \left(\prod_{i=1}^t \kappa_i\right) Q'(s_0, a_0) [\gamma \E_{\pi} Q(s_{t+1}, .) - Q(s_t, a_t) ]^\top \right] \\
& = \sum_{t=0}^{\infty} (\lambda \gamma)^t \E_{ \substack{s_{0:t+1} \\ a_{0:t}}}\left[ Q'(s_0, a_0)\left(\prod_{i=1}^t \kappa_i\right) 
[\gamma \E_{\pi} Q(s_{t+1}, .) - Q(s_t, a_t) ]^\top \right] \\
 & = \sum_{t=0}^{\infty} (\lambda \gamma)^t \E_{ \substack{s_{0:t} \\ a_{0:t}}}\left[ Q'(s_0, a_0) \left(\prod_{i=1}^t \kappa_i\right) ( \gamma \E_{s_{t+1}}[\E_{\pi} Q(s_{t+1}, .)| s_t, a_t] - Q(s_t, a_t)) \right] \\
 & = \sum_{t=0}^{\infty} (\lambda \gamma)^t \E_{ \substack{s_{0:t} \\ a_{0:t}}} \left[ Q'(s_0, a_0)\left(\prod_{i=1}^t \kappa_i\right) ( \gamma \sum_{s' \in \S} \sum_{a' \in \A} p(s'|s_t, a_t) \pi(a'|s') Q(s', a') - Q(s_t, a_t))\right] \\
& =  \sum_{t=0}^{\infty} (\lambda \gamma)^t\E_{ \substack{s_{0:t} \\ a_{0:t}}}\left[ Q'(s_0, a_0)\left(\prod_{i=1}^t \kappa_i\right) (\gamma P^{\pi}Q(s_t, a_t) - Q(s_t, a_t))\right] \\
& =  \sum_{t=0}^{\infty} (\lambda \gamma)^t \E_{ \substack{s_{0:t-1} \\ a_{0:t-1}}} \left[ Q'(s_0, a_0)
\left(\prod_{i=1}^{t-1} \kappa_i\right)  \sum_{s' \in \S} \sum_{a' \in \A} p(s'|s_{t-1}, a_{t-1}) \kappa(a', s') \mu(a'|s') (\gamma P^{\pi}Q(s', a') - Q(s', a'))\right] \\
& =  \sum_{t=0}^{\infty} (\lambda \gamma)^t  \E_{ \substack{s_{0:t-1} \\ a_{0:t-1}}}
\left[ Q'(s_0, a_0) \left(\prod_{i=1}^{t-1} \kappa_i\right) P^{\kappa \mu}(\gamma P^{\pi} - I)Q(s_{t-1}, a_{t-1}) \right] \\
& = \E_{s_0, a_0} \left[ Q'(s_0, a_0)\sum_{t=0}^{\infty} (\lambda \gamma)^t  ( P^{\kappa \mu})^t(\gamma P^{\pi} - I)Q(x_0, a_0)\right] \\
& =  \E_{s_0, a_0} \left[ Q'(s_0, a_0)(I - \lambda \gamma P^{\kappa \mu})^{-1}( \gamma P^{\pi} - I)Q(s_0, a_0) \right] \\
& = \sum_{s \in \S} \sum_{a \in \A} \xi(s, a) Q'(s, a)(I - \lambda \gamma P^{\kappa \mu})^{-1}( \gamma P^{\pi} - I)Q(s, a) \\
& = Q'^{\top} \Xi (I - \lambda \gamma P^{\kappa \mu})^{-1}( \gamma P^{\pi} - I)Q \\ 
\end{aligned}
\end{equation*}
So, $\theta'^{\top} \E[A_k] \theta = \theta'^{\top} \Phi^{\top} \Xi (I - \lambda \gamma P^{\kappa \mu})^{-1}(\gamma P^{\pi} - I) \Phi \theta \quad \forall \theta, \theta' \in \R^d$, which implies that:
\begin{equation*}
\E[A_k] = \Phi^{\top} \Xi (I - \lambda \gamma P^{\kappa \mu})^{-1}(P^{\pi} - I) \Phi
\end{equation*}
\begin{equation*}
\begin{aligned}
\theta^\top\E[b_k] & = \E[\sum_{t=0}^{\infty} (\lambda \gamma)^{t} \left(\prod_{i=1}^t \kappa_i\right) r_t Q(s_0, a_0)] 
= \sum_{t=0}^{\infty} (\lambda \gamma)^{t} \E_{ \substack{s_{0:t} \\ a_{0:t}} }\left[Q(s_0, a_0)  \left(\prod_{i=1}^t \kappa_i\right) r(s_t, a_t)  \right] \\
& = \sum_{t=0}^{\infty} (\lambda \gamma)^{t} \E_{ \substack{s_{0:t-1} \\ a_{0:t-1}} }\left[Q(s_0, a_0)  \left(\prod_{i=1}^{t-1} \kappa_i\right)  \sum_{s' \in \S} \sum_{a' \in \A} p(s'|s_{t-1}, a_{t-1}) \kappa(a', s') \mu(a'|s') r(s', a')  \right] \\
& = \sum_{t=0}^{\infty} (\lambda \gamma)^{t} \E_{ \substack{s_{0:t-1} \\ a_{0:t-1}} }\left[Q(s_0, a_0) \left (\prod_{i=1}^{t-1} \kappa_i\right)  P^{\kappa \mu}r(s', a')  \right] 
 = \E_{s_0, a_0}\left[ Q(s_0, a_0) (I - \lambda \gamma P^{\kappa \mu})^{-1} r(s_0, s_0)\right] \\
 & = \sum_{s \in \S} \sum_{a \in \A} \xi(s, a) Q(s, a)(I - \lambda \gamma P^{\kappa \mu})^{-1} r(s,a) = Q^{\top} \Xi (I - \lambda \gamma P^{\kappa \mu})^{-1} r
\end{aligned}
\end{equation*}
So, $\theta^\top\E[b_k]  = \theta^\top \Phi^{\top} \Xi (I - \lambda \gamma P^{\kappa \mu})^{-1} r \quad \forall \theta \in \R^d$, which implies that:
\begin{equation*}
\E[b_k] =  \Phi^{\top} \Xi (I - \lambda \gamma P^{\kappa \mu})^{-1} r
\end{equation*}

\section{Proof of Proposition \ref{prop:MSPBE}}
\begin{equation*}
\begin{aligned}
\mathbf{MSPBE}(\theta) & = \frac{1}{2}||\Pi^{\mu} \mathcal{R}(\Phi \theta)- \Phi \theta ||^2_{\Xi} = \frac{1}{2}||\Pi^{\mu} \left(\mathcal{R}(\Phi \theta)- \Phi \theta \right)||^2_{\Xi} \\
& = \frac{1}{2} \left(\Pi^{\mu} \left(\mathcal{R}(\Phi \theta)- \Phi \theta \right)\right)^\top \Xi \left(\Pi^{\mu} \left(\mathcal{R}(\Phi \theta)- \Phi \theta \right) \right) \\
& = \frac{1}{2}\left(\Phi^\top \Xi\left(\mathcal{R}(\Phi \theta)- \Phi \theta\right)\right)^\top (\Phi^\top \Xi \Phi)^{-1} \Phi^\top \Xi \left(\Phi (\Phi^\top \Xi \Phi)^{-1}\Phi^\top \Xi(\mathcal{R}(\Phi \theta)- \Phi \theta) \right) \\ 
& = \frac{1}{2} ||\Phi^\top \Xi\left(\R(\Phi \theta)- \Phi \theta\right) ||^2_{M^{-1}} \\
& = \frac{1}{2}  || \Phi^\top \Xi \left( (I - \lambda \gamma P^{\mu \pi})^{-1} (\T^{\pi} - \lambda \gamma P^{\mu \pi}) \Phi \theta - \Phi \theta \right)||^2_{M^{-1}} \\
& = \frac{1}{2}  || \Phi^\top \Xi(I - \lambda \gamma P^{\mu \pi})^{-1}(\gamma \mathit{P}^{\pi} - I) \Phi \theta + \Phi^\top \Xi(I - \lambda \gamma P^{\mu \pi})^{-1} r||^2_{M^{-1}} \\
& =  \frac{1}{2} || A \theta + b||^2_{M^{-1}}
\end{aligned}
\end{equation*}

\section{Proof of Proposition \ref{prop:eligibility}}
Let's show that $\mathbb{E}[\hat{A}_k] = A$. Let's $\Delta_t$ denotes $[\gamma \mathbb{E}_{\pi} \phi(s_{t+1}, .)^\top  - \phi(s_t, a_t)^\top]$
\begin{align*}
A  & = \E \left[ \sum_{t=k}^{\infty} (\lambda \gamma)^{t-k} \left(\prod_{i=k+1}^t \kappa_i\right) \phi(s_k, a_k) \Delta_t \right] \\
& =  \E \left[ \phi(s_k, a_k) \Delta_k + 
\sum_{t=k+1}^{\infty} (\lambda \gamma)^{t-k} \left(\prod_{i=k+1}^t \kappa_i\right) \phi(s_k, a_k) \Delta_t \right] \\
& = \E \left[ \phi(s_k, a_k) \Delta_k + \sum_{t=k}^{\infty} (\lambda \gamma)^{t-k+1} \left(\prod_{i=k+1}^{t+1} \kappa_i\right) \phi(s_{k}, a_{k}) \Delta_{t+1} \right] \\
&  = \E \left[ \phi(s_k, a_k) \Delta_k + \lambda \gamma \kappa(s_{k+1}, a_{k+1})\phi(s_k, a_k) \Delta_{k+1} +  \sum_{t=k+1}^{\infty} (\lambda \gamma)^{t-k+1} \left(\prod_{i=k+1}^{t+1} \kappa_i\right) \phi(s_{k}, a_{k}) \Delta_{t+1} \right] 
\end{align*}
\begin{align*}
\quad & \stackrel{(\star)}{=} \E \left[ \phi(s_k, a_k) \Delta_k + \lambda \gamma \kappa(s_{k}, a_{k})\phi(s_{k-1}, a_{k-1}) \Delta_{k} +  \sum_{t=k+1}^{\infty} (\lambda \gamma)^{t-k+1} \left(\prod_{i=k+1}^{t+1} \kappa_i\right) \phi(s_{k}, a_{k}) \Delta_{t+1} \right] \\
& = \E \left[ \Delta_k (\phi(s_k, a_k) + \lambda \gamma \kappa(s_{k}, a_{k})\phi(s_{k-1}, a_{k-1}) 
+ (\lambda \gamma)^2 \kappa(s_{k}, a_{k}) \kappa(s_{k-1}, a_{k-1})\phi(s_{k-2}, a_{k-2}) + ...) \right]  \\
& =  \E \left[\Delta_k \left(\sum_{i=0}^k  (\lambda \gamma)^{k-i} \left(\prod_{j=i+1}^{k} \kappa_j\right) \phi(x_i,a_i)\right) \right] \\
& = \E [\Delta_k e_k]  = \mathbb{E}[\hat{A}_k]
\end{align*}
we have used in the line ($\star$) the fact that 
$ \E[\kappa(s_{k+1}, a_{k+1})\phi(s_{k} a_{k}) \Delta_{k+1}] = \E[\kappa(s_{k}, a_{k})\phi(s_{k-1} a_{k-1}) \Delta_k]$ thanks to the stationarity of the distribution $d$.\\
we have also denote by $e_k$ the following vector:
\begin{equation*}
\begin{aligned}
e_k & = \sum_{i=0}^k  (\lambda \gamma)^{k-i} \left(\prod_{j=i+1}^{k} \kappa_j\right) \phi(s_i,a_i)\\ 
& = \lambda \gamma \kappa_k \left( \sum_{i=0}^{k-1}  (\lambda \gamma)^{k-1-i} \left(\prod_{j=i+1}^{k-1} \kappa_j\right) \phi(s_i,a_i)\right) + \phi(s_k, a_k) \\
& =  \lambda \gamma \kappa_k e_{k-1} + \phi(s_k, a_k)
\end{aligned}
\end{equation*}
Vector $e_k$ corresponds to the eligibility traces defined in the proposition. Similarly, we could show that $\mathbb{E}_{\mu}[\hat{b}_k] = b$.

\section{True on-line equivalence}
In \cite{van2014off}, the authors derived a true on-line update for GTD($\lambda$) that empirically performed better than GTD($\lambda$) with eligibility traces. Based on this work, we derive true on-line updates for our algorithm. The gradient off-policy algorithm was derived by turning the expected forward view into an expected backward view which can be sampled. In order to derive a true on-line update, we sample instead the forward view and then we turn the sampled forward view to an exact backward view using Theorem 1 in \cite{van2014off}.
If $k$ denotes the time horizon, we consider the sampled truncated interim forward return:

\begin{equation*}
\forall t < k, \quad Y_t^k = \sum_{i=t}^{k-1} (\lambda \gamma)^{i-t} \left(\prod_{j=t+1}^i \kappa_j\right) \delta_i
\end{equation*}

where $\delta_i = r_i + \theta_t^\top\E_{\pi}\phi(s_{t+1}, \cdot) - \theta_t^\top \phi(s_t, a_t)$, which gives us the sampled forward update of $\omega$:
\begin{equation} \label{eq:forward}
\forall k < t, \quad \omega_{t+1}^k = \omega_t^k + \alpha_t(Y_t^k - \phi(x_t, a_t)^\top \omega_t^k)\phi(x_t, a_t) 
\end{equation}
\begin{proposition} \label{prop:dutch}
For any k, the parameter $\omega_k^k$ defined by the forward view (\ref{eq:forward}) is equal to $\omega_k$ defined by the following backward view:
\begin{equation*}
\begin{aligned}
e^{\omega}_{-1} &= 0, \quad \forall k \geq 0 \\
e^{\omega}_k & = \lambda \gamma \kappa_{k} e^{\omega}_{k-1} + \alpha_k (1  - \lambda \gamma \kappa_k \phi(s_k, a_k)^\top e^{\omega}_{k-1} ) \phi(s_k, a_k) \\
\omega_{k+1} & = \omega_k + \delta_k e^{\omega}_k - \alpha_t \phi(s_k, a_k)^\top \omega_k   \phi(s_k, a_k)
\end{aligned}
\end{equation*}
\end{proposition}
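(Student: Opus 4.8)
The plan is to follow the ``dutch traces'' template of \citet{van2014off} (their Theorem~1), adapted to the off-policy weighting $\kappa$. Abbreviate $\phi_t=\phi(s_t,a_t)$ and $\kappa_t=\kappa(s_t,a_t)$, and note that the forward update is the affine-in-the-target map $\omega_{t+1}^k=(I-\alpha_t\phi_t\phi_t^\top)\omega_t^k+\alpha_t Y_t^k\phi_t$ with $\omega_0^k=\omega_0$ for every $k$. I would prove the claim by induction on the horizon $k$, the induction hypothesis being exactly $\omega_k^k=\omega_k$. The base case $k=0$ is immediate since no forward update is performed and $e^\omega_{-1}=0$.

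For the inductive step I would first record two elementary facts about the truncated returns. Writing $C^k_t\triangleq(\lambda\gamma)^{k-t}\prod_{j=t+1}^{k}\kappa_j$ (so $C^k_k=1$ and, crucially, $C^k_t=\lambda\gamma\kappa_k\,C^{k-1}_t$ for $t\le k-1$), one has $Y^{k+1}_k=\delta_k$ and $Y^{k+1}_t=Y^k_t+C^k_t\,\delta_k$ for $t<k$, since the only new summand in passing from horizon $k$ to $k+1$ is the $i=k$ term. Because the forward update is affine in the target, a short induction on $t$ then gives $\omega_t^{k+1}=\omega_t^{k}+\delta_k\,u_t$ for $t=0,\dots,k$, where the propagation vector obeys $u_0=0$ and $u_{t+1}=(I-\alpha_t\phi_t\phi_t^\top)u_t+\alpha_t C^k_t\phi_t$; in particular $\omega_k^{k+1}=\omega_k^{k}+\delta_k u_k$.

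The heart of the argument is the identity $u_k=\lambda\gamma\kappa_k\,e^\omega_{k-1}$. Unrolling the $u$-recursion gives $u_k=\sum_{t=0}^{k-1}\big(\prod_{s=t+1}^{k-1}(I-\alpha_s\phi_s\phi_s^\top)\big)\alpha_t C^k_t\phi_t$; pulling the factor $\lambda\gamma\kappa_k$ out of $C^k_t=\lambda\gamma\kappa_k C^{k-1}_t$ reduces the claim to showing $\sum_{t=0}^{m}\big(\prod_{s=t+1}^{m}(I-\alpha_s\phi_s\phi_s^\top)\big)\alpha_t C^m_t\phi_t=e^\omega_m$. This last equality I would prove by a nested induction on $m$: splitting off the $t=m$ summand leaves $\alpha_m\phi_m$ plus $\lambda\gamma\kappa_m(I-\alpha_m\phi_m\phi_m^\top)$ times the analogous sum at $m-1$, which by the inner hypothesis is $\lambda\gamma\kappa_m(I-\alpha_m\phi_m\phi_m^\top)e^\omega_{m-1}$, and simplifying this is precisely the stated recursion $e^\omega_m=\lambda\gamma\kappa_m e^\omega_{m-1}+\alpha_m(1-\lambda\gamma\kappa_m\phi_m^\top e^\omega_{m-1})\phi_m$.

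Finally I would apply the forward update once more at $t=k$ to the $(k+1)$-sequence, $\omega_{k+1}^{k+1}=(I-\alpha_k\phi_k\phi_k^\top)\omega_k^{k+1}+\alpha_k\delta_k\phi_k=(I-\alpha_k\phi_k\phi_k^\top)(\omega_k^{k}+\delta_k u_k)+\alpha_k\delta_k\phi_k$, then substitute the induction hypothesis $\omega_k^{k}=\omega_k$ and $u_k=\lambda\gamma\kappa_k e^\omega_{k-1}$; collecting terms yields $\omega_k-\alpha_k(\phi_k^\top\omega_k)\phi_k+\delta_k\big(\lambda\gamma\kappa_k e^\omega_{k-1}+\alpha_k(1-\lambda\gamma\kappa_k\phi_k^\top e^\omega_{k-1})\phi_k\big)=\omega_k+\delta_k e^\omega_k-\alpha_k(\phi_k^\top\omega_k)\phi_k$, which is exactly the stated backward update for $\omega_{k+1}$, closing the induction. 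The main obstacle is the bookkeeping of the propagation vector $u_t$: the summand coefficients $C^k_t$ depend on the horizon $k$, and the whole proof hinges on the factorization $C^k_t=\lambda\gamma\kappa_k C^{k-1}_t$, which is precisely what lets all the horizon-dependence be funneled into the single trace vector $e^\omega$.
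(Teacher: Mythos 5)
Your proof is correct, and it takes a more self-contained route than the paper. The paper's own proof only verifies the horizon-difference relation for the interim returns, $Y_t^{k+1}-Y_t^k=(\lambda\gamma)^{k-t}\bigl(\prod_{j=t+1}^{k}\kappa_j\bigr)\delta_k=\lambda\gamma\kappa_{t+1}\bigl(Y_{t+1}^{k+1}-Y_{t+1}^{k}\bigr)$, and then invokes Theorem~1 of \citet{van2014off} as a black box, finishing with the simplifications $Y_t^{t+1}=\delta_t$ and $Y_t^t=0$ (incidentally, the paper's display labels the pulled-out factor $\kappa_{k+1}$, while the algebra — and your factorization — shows it is the time-indexed $\kappa_{t+1}$, which is what the cited theorem requires and what makes $\kappa_k$ appear in the trace recursion at time $k$). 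You instead re-derive the relevant instance of that theorem from scratch: an outer induction on the horizon via the propagation vector $u_t$ with $\omega_t^{k+1}=\omega_t^k+\delta_k u_t$, and an inner induction establishing the closed form $e^{\omega}_m=\sum_{t=0}^{m}\bigl(\prod_{s=t+1}^{m}(I-\alpha_s\phi_s\phi_s^\top)\bigr)\alpha_t C^m_t\phi_t$, everything hinging on the same structural fact $C^k_t=\lambda\gamma\kappa_k C^{k-1}_t$ that underlies the paper's difference relation. I checked the algebra of the inner induction and of the final collection of terms, and it does yield exactly the stated backward update (confirming along the way that the $\alpha_t$ in the proposition's $\omega$-update is a typo for $\alpha_k$). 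What your approach buys is independence from the external theorem and fully explicit bookkeeping of how the horizon dependence funnels into the single trace vector; what the paper's approach buys is brevity, since the induction you carry out by hand is exactly what \citet{van2014off} have already packaged.
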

\begin{proof}
The return's temporal difference  $Y_t^{k+1} - Y_t^k $ are related through:
\begin{equation*}
\begin{aligned}
\forall t<k, \quad Y_t^{k+1} -Y_t^k &=  \sum_{i=t}^{k} (\lambda \gamma)^{i-t} (\prod_{j=t+1}^i \kappa_j) \delta_i - \sum_{i=t}^{k-1} (\lambda \gamma)^{i-t} (\prod_{j=t+1}^i \kappa_j) \delta_i \\
& = (\lambda \gamma)^{k-t} \left(\prod_{j=t+1}^k \kappa_j\right) \delta_k \\
& = \lambda \gamma \kappa_{k+1} \left(  (\lambda \gamma)^{k-(t+1)} \left(\prod_{j=t+2}^k \kappa_j\right) \delta_k \right) \\
& =  \lambda \gamma \kappa_{k+1} \left( Y_{t+1}^{k+1} -Y_{t+1}^k \right)
\end{aligned}
\end{equation*}
We could then apply Theorem 1 of \cite{van2014off} that give us the following backward view:
\begin{equation*}
\begin{aligned}
e_0 &= \alpha_0 \phi(x_0, a_0) \\
e_t & = \lambda \gamma \kappa_{t} e_{t-1} + \alpha_t (1  - \lambda \gamma \kappa_k \phi(s_t, a_t)^\top e_{t-1} ) \phi(s_t, a_t) \quad \forall t > 0 \\
\omega_{t+1} & = \omega_t + ( Y_t^{t+1} -Y_t^t) e_t + \alpha_t (Y_t^t - \phi(s_t, a_t)^\top \omega_t)\phi(s_t, a_t)\\
& \stackrel{(\star)}{=}  \omega_t + \delta_t e_t - \alpha_t \phi(s_t, a_t)^\top \omega_t \phi(s_t, a_t)
\end{aligned}
\end{equation*}
We used in the line ($\star$) that $Y_t^{t+1} = \delta_t$ and $Y_t^t = 0$
\end{proof}

The resulting detailed procedure is provided in Algorithm \ref{algo:dutch}.

Note that when $\lambda$ is equal to zero, the Algorithm 1 and 2 both reduce to the same update:
\begin{equation*}
\begin{aligned}
\omega_{k+1}  & = \omega_k + \alpha_k ( \delta_k - \phi(s_k, a_k)^\top \omega_k) \phi(s_k, a_k) \\
\theta_{k+1} & = \theta_{k} - \alpha_k \phi(s_k, a_k)^\top w_{k}  (\gamma \mathbb{E}_{\pi}[\phi(s_{k+1}, .)] - \phi(s_k, a_k)])
\end{aligned}
\end{equation*}

\begin{algorithm}
\caption{\label{algo:dutch}Gradient Off-policy with eligibility/Dutch traces }
\begin{algorithmic}
\medskip
\item[\textbf{Given:}] target policy $\pi$, behavior policy $\mu$
\STATE Initialize $\theta_0$ and $\omega_0$
\FOR {n = 0 \ldots}	
	\STATE set $e^{\theta}_{-1} = e^{\omega}_{-1} = 0$
  	\FOR {k = 0 \ldots  end of episode}
    	\STATE Observe $s_k, a_k, r_k, s_{k+1}$ according to $\mu$
    	\STATE \textbf{Update traces}
    	\STATE $e_k = \lambda \gamma \kappa(s_k, a_k) e_{k-1} + \phi(s_k, a_k)$
        \STATE \textbf{Update Dutch traces}
		\STATE $e^{\omega}_k = \lambda \gamma \kappa_{k} e^{\omega}_{k-1} + \alpha_k \left(1  - \lambda \gamma \kappa_k \phi(s_k, a_k)^\top e^{\omega}_{k-1} \right) \phi(s_k, a_k)$
    	\STATE \textbf{Update parameters}
    	\STATE $\delta_k = r_k + \gamma \theta_{k}^\top \mathbb{E}_{\pi}\phi(s_{k+1}, .) - \theta_{k}^\top\phi(s_k, a_k)$
    	\STATE $\omega_{k+1}  = \omega_k + \delta_k e^{\omega}_k - \alpha_k \phi(s_k, a_k)^\top \omega_k \phi(s_k, a_k)$
    	\STATE $\theta_{k+1} = \theta_{k} - \alpha_k \omega_{k}^\top e_k \left(\gamma \mathbb{E}_{\pi}[\phi(s_{k+1}, .)] - \phi(s_k, a_k)\right)$
  	\ENDFOR

\ENDFOR
\end{algorithmic}
\end{algorithm}

\section{Convergence Rate Analysis}
Let's recall the key quantities defined in the main article:
\begin{equation*}
\rho \triangleq \lambda_{\max}(A^{\top}M^{-1}A), \quad
\delta  \triangleq \lambda_{\min}(A^{\top} M^{-1}A ), \quad
L_G \triangleq \| \E \left[ \hat{G}_k^{\top} \hat{G}_k\cbar \mathcal{F}_k \right] \|
\end{equation*}
We will make use of spectral properties of the matrix $G$ provided in the appendix A of  \citep{du2017stochastic}.
it was shown that if we set $\beta = \frac{8 \rho}{\lambda_{\min}(M)}$, the matrix $G$ is diagonalizable with all its eigenvalues real and positive. It is a straightforward application of result from \citep{benzi2006eigenvalues} \\
Moreover, it was proved that $G$ can be written as: 
$G= Q \Lambda Q^{-1}$ where $\Lambda$ is a diagonal matrix whose diagonal entries are the eigenvalues of $G$ and $Q$ consists of it eigenvectors as columns such that the condition number of Q is upper bounded by the one of $M$ as follows:
\begin{equation*}
c(Q)^2 \leq 8 c(M)
\end{equation*}
Finally, the paper showed upper and lower bounds for the eigenvalues of G:
\begin{align*}
\lambda_{\max}(G) & \leq 9 c(M) \rho \\
\lambda_{\min}(G) &  \geq \frac{8}{9} \delta
\end{align*}
Let's recall our updates:
\begin{equation*}
    z_{k+1} = z_k - \alpha_k (\hat{G}_k z_k - \hat{g}_k)
\end{equation*}
By subtracting $z^{\star}$ from both sides on the later equation and using the optimality condition $G z^{\star}+ g = 0$:
\begin{equation}
\Delta_{k+1} = \Delta_k - \alpha_k G \Delta_k + \alpha_k \left[ G z_k - g - (\hat{G}_k z_k - \hat{g}_k)\right] \quad 
\text{where} \quad \Delta_k \triangleq z_k - z^{\star}
\end{equation}
By multiplying both sides by $Q^{-1}$ and using the fact that $Q^{-1}G = \Lambda Q^{-1}$:
\begin{align*}
Q^{-1}\Delta_{k+1} & = Q^{-1}\Delta_k - \alpha_k Q^{-1} G \Delta_k + \alpha_k Q^{-1}\left[ G z_k - g - (\hat{G}_k z_k - \hat{g}_k)\right] \\
& = (I - \alpha_k\Lambda) Q^{-1}\Delta_k + \alpha_k Q^{-1}\left[ G z_k - g - (\hat{G}_k z_k - \hat{g}_k) \right]
\end{align*}
\begin{align*}
\E \left[ \Big\|Q^{-1}\Delta_{k+1} \Big\|^2 \cbar \mathcal{F}_{k-1} \right] &=  \E \left[ 
\Big\| (I - \alpha_k) Q^{-1}\Delta_k + \alpha_k Q^{-1}\left[ G z_k  -g - (\hat{G}_k z_k - \hat{g}_k) \right] \Big\|^2
\cbar \mathcal{F}_{k-1} \right] \\
& = \E \left[ \Big\|   (I - \alpha_k\Lambda) Q^{-1}\Delta_k \Big\|^2 \cbar \mathcal{F}_{k-1} \right] \\
& \phantom{{}=1}  + 2\E \left[  \Big\langle  (I - \alpha_k) Q^{-1}\Delta_k,  \alpha_k Q^{-1}\left[ G z_k - g - (\hat{G}_k z_k - \hat{g}_k) \right] \Big\rangle \cbar \mathcal{F}_{k-1} \right] \notag \\
 & \phantom{{}=1} +\alpha_k^2\E \left[ \Big\| Q^{-1}\left[ G z_k - g - (\hat{G}_k z_k - \hat{g}_k) \right] \Big\|^2
\cbar \mathcal{F}_{k-1} \right] \notag \\
& =  \Big\|   (I - \alpha_k\Lambda) Q^{-1}\Delta_k \Big\|^2+ \alpha_k^2\E \left[ \Big\| Q^{-1}\left[ G z_k - g - (\hat{G}_k z_k - \hat{g}_k) \right]\Big\|^2
\cbar \mathcal{F}_{k-1} \right]  \\
& \leq \Big\|  I - \alpha_k\Lambda \Big\|^2 \Big\| Q^{-1}\Delta_k \Big\|^2 + \alpha_k^2 \E \left[ \| Q^{-1}  (\hat{G}_k z_k - \hat{g}_k) \|^2 \cbar \mathcal{F}_{k-1} \right] \\
& =  \Big\|  I - \alpha_k\Lambda \Big\|^2 \Big\| Q^{-1}\Delta_k \Big\|^2 + \alpha_k^2 \E \left[ \Big\| Q^{-1}  ( \hat{G}_k\Delta_k + \hat{G}_k z^{\star} - \hat{g}_k) \|^2 \cbar \mathcal{F}_{k-1} \right] \\
& \leq  \Big\|  I - \alpha_k\Lambda \Big\|^2 \Big\| Q^{-1}\Delta_k \Big\|^2 + 2 \alpha_k^2 \E \left[ \Big\| Q^{-1} \hat{G}_k\Delta_k \Big\|^2 \cbar \mathcal{F}_{k-1} \right]  + 2 \alpha_k^2 \E \left[ \Big\|  Q^{-1}(\hat{G}_k z^{\star} + \hat{g}_k) \Big\|^2 \cbar \mathcal{F}_{k-1} \right]
\end{align*}
we use in the third line the fact that  $\E \left[ \hat{G}_k \cbar \mathcal{F}_{k-1}  \right] = G$ and $\E \left[ \hat{g}_{k-1} \cbar \mathcal{F}_{k-1}  \right] = g$.
\begin{align*}
\|  I - \alpha_k\Lambda \|^2 & = \max\{ |1 - \alpha_k \lambda_{\min}(G)|^2, | 1 - \alpha_k \lambda_{\max}(G) |^2 \} \\
& \leq 1 - 2 \alpha_k \lambda_{\min} + \alpha_k^2 \lambda_{\max}^2 \\
& \leq 1 - 2 \alpha_k \frac{8}{9}\delta + \alpha_k^2 9^2 c(M)^2 \rho^2 \\
& \leq 1 -2 \alpha_k \delta' + \alpha_k^2 9^2 c(M)^2 \rho^2
 \quad \text{where} \quad \delta' \triangleq \frac{8}{9}\delta
\end{align*}
\begin{align*}
\E \left[ \Big\| Q^{-1} \hat{G}_k\Delta_k \Big\|^2 \cbar \mathcal{F}_{k-1} \right] 
& \leq \| Q^{-1}\|^2  \E \left[ \Big\|\hat{G}_k\Delta_k \Big\|^2 \cbar \mathcal{F}_{k-1} \right] \\
& = \| Q^{-1}\|^2 \E \left[ \Delta_k^{\top} \hat{G}_k^{\top} \hat{G}_k\Delta_k 
\cbar \mathcal{F}_{k-1} \right]  \\
& =  \| Q^{-1}\|^2 \Delta_k^{\top} \E \left[ \hat{G}_k^{\top} \hat{G}_k\cbar \mathcal{F}_{k-1} \right] \Delta_k    \\
& \leq  \| Q^{-1}\|^2 \Big\| \E \left[ \hat{G}_k^{\top} \hat{G}_k\cbar \mathcal{F}_k \right] \Big\|^2 \Delta_k^{\top} \Delta_k \\
& \leq  \| Q^{-1}\|^2  L_G \| \Delta_k\|^2 \\
& = \| Q^{-1}\|^2  L_G \|Q Q^{-1} \Delta_k\|^2 \\
& \leq \| Q^{-1}\|^2 \|Q\|^2  L_G \|Q^{-1} \Delta_k\|^2 \\
& \leq c(Q)^2 L_G \|Q^{-1} \Delta_k\|^2
\end{align*}
So, we have:

\begin{equation*}
\E \left[ \| Q^{-1} \Delta_{k+1} \|^2\right] 
\leq (1 -2 \alpha_k \delta' + \alpha_k^2 9^2 c(M)^2 \rho^2 + 16 \alpha_k^2 c(M) L_G)  \E \left[\| Q^{-1}\Delta_k\|^2 \right] + 2 \alpha_k^2  \| Q^{-1}\|^2 \E \left[ \|  \hat{G}_k z^{\star} - \hat{g}_k) \|^2 \right] 
\end{equation*}

By selecting $\alpha_k = \frac{2 \delta'}{\delta'^2 (k+2) + 2\times 9^2 c(M)^2 \rho^2
+ 32 c(M) L_G} 
= \frac{2 \delta'}{\delta'^2 (k+2) + \zeta}$ with $\zeta =2 \times 9^2 c(M)^2 \rho^2
+ 32 c(M) L_G$, we get:
\begin{align*}
\E \left[ \| Q^{-1} \Delta_{k+1} \|^2\right] & \leq (1 - \delta' \alpha_k) \E \left[\| Q^{-1} \Delta_k\|^2 \right] + 2 \alpha_k^2  \| Q^{-1}\|^2 \E \left[ \| \hat{G}_k z^{\star} - \hat{g}_k \|^2 \right] \\
& = \frac{\delta'^2 k + \zeta }{\delta'^2(k+2) + \zeta} \E \left[ \| Q^{-1} \Delta_{k} \|^2\right] + \frac{8 \delta'^2}{(\delta'^2(k+2) + \zeta)^2} \| Q^{-1}\|^2 \E \left[ \|  \hat{G}_k z^{\star} + \hat{g}_k) \|^2 \right] \\
& \leq \left( \prod_{i = 0}^k \frac{\delta'^2 i + \zeta }{\delta'^2(i+2) + \zeta} \right) \E \left[ \| Q^{-1} \Delta_{0} \|^2\right]   \notag \\
& \phantom{{}=1} + 8 \delta'^2 \sum_{i = 0}^k \left( \prod_{j = i}^k \frac{\delta'^2 j + \zeta }{\delta'^2(j+2) + \zeta} \right) \frac{1}{(\delta'^2(i+2) + \zeta)^2} 
\| Q^{-1}\|^2 \E \left[ \|  \hat{G}_i z^{\star} + \hat{g}_i) \|^2 \right] \\
& = \frac{\zeta  (\delta'^2 + \zeta) }{  (\delta'^2(k+1) + \zeta) ( (\delta'^2(k+2) + \zeta)} \E \left[ \| Q^{-1} \Delta_{0} \|^2\right]  \notag \\
&  \phantom{{}=1} + 8 \delta'^2 \sum_{i = 0}^k \frac{(\delta'^2(i+1) + \zeta)  (\delta'^2i + \zeta) }{  (\delta'^2(k+1) + \zeta) ( (\delta'^2(k+2) + \zeta)}
\frac{1}{(\delta'^2(i+2) + \zeta)^2} 
\| Q^{-1}\|^2 \E \left[ \|  \hat{G}_i z^{\star} - \hat{g}_i\|^2 \right]  \\
& \leq \frac{\zeta  (\delta'^2 + \zeta) }{  (\delta'^2(k+1) + \zeta) ( \delta'^2(k+2) + \zeta)} \E \left[ \| Q^{-1} \Delta_{0} \|^2\right]  \notag \\
&  \phantom{{}=1} + 8 \delta'^2 \sum_{i = 0}^k \frac{1}{  (\delta'^2(k+1) + \zeta) ( \delta'^2(k+2) + \zeta)}
\| Q^{-1}\|^2 \E \left[ \|  \hat{G}_i z^{\star} - \hat{g}_i \|^2 \right]  \\
& \leq \frac{(\delta' + \zeta)^2}{(\delta'^2 (k+1) + \zeta)^2 } \E \left[ \| Q^{-1} \Delta_{0} \|^2\right] \\
&  \phantom{{}=1} + 8 \frac{\delta'^2 (k+1) }{  (\delta'^2(k+1) + \zeta) ( \delta'^2(k+2) + \zeta)} \| Q^{-1}\|^2
\sup_{i = 0 \ldots k} \E \left[ \|  \hat{G}_i z^{\star} + \hat{g}_i) \|^2 \right] \\
& \leq \frac{(\delta' + \zeta)^2}{(\delta'^2 (k+1) + \zeta)^2 }\E \left[ \| Q^{-1} \Delta_{0} \|^2\right] 
+ \frac{8}{  (\delta'^2(k+1) + \zeta) } \| Q^{-1}\|^2
\sup_{i = 0 \ldots k} \E \left[ \|  \hat{G}_i z^{\star} - \hat{g}_i \|^2 \right] \\
&\leq \frac{(\delta' + \zeta)^2 \| Q^{-1}\|^2}{(\delta'^2 (k+1) + \zeta)^2 }\E \left[ \| \Delta_{0} \|^2\right] 
+ \frac{8 \sigma^2 \| Q^{-1}\|^2}{  (\delta'^2(k+1) + \zeta) } (1 + \| z^{\star}\|^2)
\end{align*}

Moreover, we have  
$\E \left[ \| \Delta_{k+1} \|^2\right] = \E \left[ \| Q Q^{-1}\Delta_{k+1} \|^2\right]
    \leq \| Q\|^2 \E \left[ \|Q^{-1}\Delta_{k+1} \|^2\right] $. Then, we get:
\begin{align*}
    \E \left[ \| \Delta_{k+1} \|^2\right] 
    & \leq \frac{(\delta' + \zeta)^2 c(Q)^2}{(\delta'^2 (k+1) + \zeta)^2 }\E \left[ \| \Delta_{0} \|^2\right] 
+ \frac{8 \sigma^2 c(Q)^2}{  (\delta'^2(k+1) + \zeta) } (1 + \| z^{\star}\|^2) \\
& \leq \frac{8 (\delta' + \zeta)^2 c(M)}{(\delta'^2 (k+1) + \zeta)^2 }\E \left[ \| \Delta_{0} \|^2\right] 
+ \frac{8^2 \sigma^2 c(M)}{  (\delta'^2(k+1) + \zeta) } (1 + \| z^{\star}\|^2) \\
& = \frac{8 9^2 (8\delta + 9\zeta)^2 c(M)}{(8^2\delta^2 (k+1) + 9^2\zeta)^2 }\E \left[ \| \Delta_{0} \|^2\right] 
+ \frac{9^2 \times 8^2 \sigma^2 c(M)}{  (8^2\delta^2(k+1) + 9^2\zeta) } (1 + \| z^{\star}\|^2) \\
& = 9^2 \times 8 c(M) \Big\{ \frac{(8\delta + 9\zeta)^2 \E \left[ \| \Delta_{0} \|^2\right] }{(8^2\delta^2 (k+1) + 9^2\zeta)^2 } 
+ \frac{8 \sigma^2 (1 + \| z^{\star}\|^2)}{  (8^2\delta^2(k+1) + 9^2\zeta) } \Big\}
\end{align*}
The overall convergence rate is then equal to $O(1/k)$.
\end{document}